\tikzset{>=latex}
\newcommand{\vect}[1]{\boldsymbol{\mathbf{#1}}}
\newcommand{\norm}[1]{\left\lVert#1\right\rVert}
\newtheorem{theorem}{Theorem}
\newtheorem{lemma}{Lemma}          
\newcommand{\THRESHOLD}{f_{\textsf{thresh}}}
\newcommand{\THRESHOLDt}[1]{\THRESHOLD^{{#1}}}
\newcommand{\fthresh}[1]{f_{\textsf{thresh}}^{#1}}
\newcommand{\abs}[1]{\left\lvert#1\right\rvert}
\newcommand{\suchthat}{: }
\newcommand{\ConflictSet}{Q}
\newcommand{\Z}{\mathbb{Z}}
\newcommand{\LCYCLECOVER}{$L$\textsc{-Cycle-Cover}}
\newcommand{\LFLIPCOVER}{$L$\textsc{-Flip-and-Cycle-Cover}}
\newcommand{\CYCLEFEAS}{f_{C}}
\newcommand{\FEASSET}{\mathcal{T}}
\newcommand{\cardinality}[1]{\lvert #1 \rvert}
\newcommand{\type}{\theta}
\newcommand{\typeSet}{\Theta}
\newcommand{\typeVec}{\vect{\type}}
\title{Small Representations of Big Kidney Exchange Graphs}
\author{John P. Dickerson$^{*,+}$ \and Aleksandr M. Kazachkov$^+$ \and Ariel Procaccia$^+$ \and Tuomas Sandholm$^+$\\ 
University of Maryland$^*$ \ \ \ \ \ Carnegie Mellon University$^+$\\
\small\texttt{john@cs.umd.edu, \{arielpro,sandholm\}@cs.cmu.edu, akazachk@cmu.edu}
}
\begin{document}
\maketitle

\begin{abstract}
Kidney exchanges are organized markets where patients swap willing but incompatible donors. In the last decade, kidney exchanges grew from small and regional to large and national---and soon, international.  This growth results in more lives saved, but exacerbates the empirical hardness of the $\mathcal{NP}$-complete problem of optimally matching patients to donors.  State-of-the-art matching engines use integer programming techniques to clear fielded kidney exchanges, but these methods must be tailored to specific models and objective functions, and may fail to scale to larger exchanges. In this paper, we observe that if the kidney exchange compatibility graph can be encoded by a constant number of patient and donor attributes, the clearing problem is solvable in polynomial time. We give necessary and sufficient conditions for losslessly shrinking the representation of an arbitrary compatibility graph. Then, using real compatibility graphs from the UNOS US-wide kidney exchange, we show how many attributes are needed to encode real graphs. The experiments show that, indeed, small numbers of attributes suffice.
\end{abstract}

\section{Introduction}\label{sec:intro}

There are over \num{100000} needy patients waiting for a kidney transplant in the United States, with similar---and increasing---demand worldwide.\footnote{{\tiny\texttt{https://optn.transplant.hrsa.gov/converge/data/}}}  Complementing potential cadaveric transplantation via the deceased donor waiting list, a recent innovation---kidney exchange~\cite{Rapaport86:Case,Roth04:Kidney}---allows patients with willing \emph{living} donors to participate in cyclic donor swaps or altruist-initiated donation chains to receive a life-saving organ.  Kidney exchange now accounts for over 10\% of living donation in the US, with that percentage increasing annually.

In reality, participating patients and donors are endowed with a set of attributes: blood type, tissue type, age, insurance, home transplant center, willingness to travel, and myriad other measurements of health, personal preference, and logistical constraints.  While some of these features can, at a cost, be temporarily or permanently changed, the attributes determine the feasibility of a potential donation from each donor to each patient.  
For example, a donor with blood type AB can only give to a patient with that blood type.

A central aspect of kidney exchange is the \emph{clearing problem}, that is, determining the ``best'' set of cyclic and chain-based swaps to perform in a given \emph{compatibility graph}, which consists of all participating patients, donors, and their potential feasible transactions.  For even simple (but realistic) models of kidney exchange, the clearing problem is $\mathcal{NP}$-hard~\cite{Abraham07:Clearing,Biro09:Maximum} and also extremely difficult to solve in practice~\cite{Glorie14:Kidney,Anderson15:Finding,Dickerson16:Position}.

In this paper, we tackle the complexity of the clearing problem via the introduction of a novel model for kidney exchange that explicitly takes into account all attributes of the participating patients and donors. Under the assumption that real kidney exchange graphs can be represented using 
a constant number of attributes, we show that our model permits polynomial-time solutions to central $\mathcal{NP}$-hard problems in general kidney exchange.
Inspired by classical results from intersection graph theory, we give conditions on the representation of arbitrary graphs in our model, and generalize to the case where participants are allowed to have a thresholded number of negative interactions between attributes.
Noting that real-life kidney exchange graphs are \emph{not} arbitrary, we show on actual data from the United Network for Organ Sharing (UNOS) US-wide kidney exchange that our model permits lossless representation of true graphs with far fewer attributes than the worst-case theoretical results require. 


\section{A New Model for Kidney Exchange}\label{sec:model}

In this section, we formalize our model of kidney exchange.  We prove that under this model certain well-known $\mathcal{NP}$-hard problems in general kidney exchange are solvable in polynomial time.  We also show that, given a compatibility graph, determining the best set of attributes to change (at some cost) is solvable in polynomial time.

\subsection{Notation \& Preliminaries}
A kidney exchange can be represented by a directed \emph{compatibility graph} $G=(V,E)$.  Each patient-donor pair, or unpaired altruistic donor, forms a vertex $v \in V$, and a directed edge exists from one vertex to another if the donor at the former can give to the patient at the latter, i.e., are compatible~\cite{Roth04:Kidney,Roth05:Kidney}.

In kidney exchange, patients and donors participate in \emph{cycles} or \emph{chains}.  In a cycle, each participating vertex receives the kidney of the previous vertex.  All transplants in a cycle must be performed simultaneously to ensure participation, and thus are limited to some small length in practice.  This ensures that no donor backs out after her patient has received a kidney but before she has donated her kidney.  Most fielded kidney exchanges---including UNOS---allow only $2$- and $3$-cycles.  In a chain, a donor \emph{without a paired patient} enters the pool, donating his kidney to a patient, whose paired donor donates his kidney to another patient, and so on~\cite{Montgomery06:Domino,Roth06:Utilizing,Rees09:Nonsimultaneous}.  Chains can be executed non-simultaneously\footnote{To see why this is, take the case where a donor backs out of a chain after his paired patient received a kidney, but before his own donation.  Unlike in the case of a broken cycle, no pair in the remaining tail of the planned chain is strictly worse off; that is, no donor was ``used up'' before her paired patient received a kidney.} and thus chains can be longer (but typically not infinite) in length.  Most exchanges---including UNOS---see great gains through the use of such ``altruist-initiated'' chains.

We consider a model that imposes additional structure on an arbitrary compatibility graph.  
For each vertex $v_i \in V$, associate attribute vectors $\vect{d}_i$ and $\vect{p}_i$ with its constituent donor and patient, respectively.  
The $q$th element $d_i^q$ of $\vect{d}_i$ takes on one of a fixed number of types---for example, one of four blood types (O, A, B, AB), or one of a few hundred standard insurance plans.  Then, for $v_i \neq v_j \in V$, we define a \emph{compatibility function} $f(\vect{d}_i,\vect{p}_j)$, a boolean function that returns the compatibility of the donor of $v_i$ with the patient of $v_j$.
We remark that our approach relates to techniques that take advantage of symmetries in a problem; see, e.g., \cite{Meseguer01:Exploiting}.

Given $V$ and associated attribute vectors, we can uniquely determine a compatibility graph $G=(V,E)$ such that $E = \{(v_i, v_j) \suchthat f(\vect{d}_i, \vect{p}_j) = 1 \ \ \forall v_i \neq v_j \in V\}$.  We claim that this model accurately mimics reality, and we later support that claim with strong experimental results on real-world data.  Furthermore, under this new model, certain complexity results central to kidney exchange change (for the better), as we discuss next.

\subsection{The Clearing Problem is Easy (in Theory)}\label{sec:general-computation}

We now tackle the central computational challenge of kidney exchange: the clearing problem.  Well-known to be $\mathcal{NP}$-hard~\cite{Abraham07:Clearing,Biro09:Maximum}, a variety of custom clearing algorithms address adaptations of the clearing problem in practice.\footnote{For an overview of practical approaches to solving the clearing problem, see a recent survey due to Mak-Hau~\shortcite{MakHau15:Kidney}.}  We show that, in our model, the clearing problem itself is solvable in polynomial time.

Formally, we are interested in a polynomial-time algorithm that solves the \emph{\LCYCLECOVER{}} problem---that is, finding the largest disjoint packing of cycles of length at most $L$.  For ease of exposition, in this section we use ``cycles'' to refer to both cycles and chains; indeed, it is easy to see that altruist donors are equivalent to standard patient-donor pairs with a patient who is compatible with all non-altruist vertices in the pool.  Then, a chain is equivalent to a cycle with a ``dummy'' edge returning to the altruist.  Also, again for ease of exposition, we assume the value of a chain of length $L$ is equal to a cycle of length $L$, due to the final donor giving to a patient on the deceased donor waiting list.

Recall that we are working in a model where each vertex $v_i$ belongs to one of a \emph{fixed} number of types determined solely by its attribute vectors $\vect{d}_i$ and $\vect{p}_i$.  Let $\Theta$ be the set of all possible types, and $\theta \in \Theta$ represent one such individual type.  With a slight abuse of notation, we can define a type compatibility function $f(\theta,\theta')=1$ if and only if there is a directed edge between vertices of type $\theta$ and $\theta'$.  (Note that this captures chains and altruist donors as described above.)

A key observation of this section is that any additional edge structure that is imposed on the graph---such as a cycle cover---would be independent of the identity of
\emph{specific} vertices; 
rather, it would only depend on their types, as vertices of the same type have the exact same incoming and outgoing neighborhoods. 
For example, in any cycle cover, if $v_i$ and $v_j$ are two vertices of the same type, 
we can swap $v_i$ and $v_j$
and obtain a feasible cycle cover of the same size. 
This observation drives our theoretical algorithmic results. 

In more detail, every cycle through vertices of $G$ can be interpreted as a closed walk through the type space. 
Every such cycle can be represented by $\typeVec = (\theta_1,\ldots,\theta_\ell) \in \Theta^\ell$, where $\ell$ is the length of the cycle. 
Let us define $f_C$ as the boolean function with 
$f_C({\typeVec}) = 1$ if and only if $$f(\theta_1,\theta_2) = \cdots = f(\theta_{\ell-1}, \theta_{\ell}) = f(\theta_{\ell},\theta_1)=1.$$
Furthermore, for $L \le n = \cardinality{V}$, 
let $\FEASSET(L)$ denote the set of closed walks through the type space of length at most $L$. 
Formally


  {\small\[
    \FEASSET(L) = \bigcup_{\ell = 2}^L \{\typeVec\in \typeSet^\ell \suchthat \CYCLEFEAS(\typeVec)=1\}.
  \]}

Let $\mathcal{C}$ be a cycle cover in $G$, and denote the number of unique \emph{vertices} matched in $\mathcal{C}$ by $\norm{\mathcal{C}}_V$. Suppose $\mathcal{C}$ has cycle cap $L$; then it is equivalent, in our setting, to a vector $\vect{m} \in \Z^{\cardinality{\FEASSET(L)}}_+$,
where, for $\typeVec \in \FEASSET(L)$, 
$m_{\typeVec}$ equals the number of cycles in $\mathcal{C}$
that can be represented in the type space by $\typeVec$.
%
Let $\cardinality{\typeVec}$ be the length of the vector $\typeVec$, and
 $
    \norm{\vect{m}}_V   
      = \sum_{\typeVec \in \FEASSET(L)} m_{\typeVec} \cardinality{\typeVec}
  $.
Then $\norm{\vect{m}}_V= \norm{\mathcal{C}}_V$, that is, $\norm{\vect{m}}_V$ is another way to express the number of matched vertices in the equivalent cycle cover. 

Now consider Algorithm~\ref{alg:main} for \LCYCLECOVER{} in our model,
which we claim is optimal and computationally efficient in our setting. 

\begin{algorithm}[h!]
\begin{enumerate}
  \item 
    $\mathcal{C}^*\gets \emptyset$
  \item 
    \textbf{for} every vector of non-negative integers $\vect{m} \in \Z^{\cardinality{\FEASSET(L)}}_+$ such that 
    $\norm{\vect{m}}_V \le n$
    \begin{itemize}[leftmargin=.15in]
      \item 
        \textbf{if} $\norm{\vect{m}}_V > \norm{\mathcal{C}^*}_V$ and there exists cycle cover $\mathcal{C}$ in $G$ such that
        for each $\typeVec\in\FEASSET(L)$, $\mathcal{C}$ contains $m_{\typeVec}$ cycles 
        of type $\typeVec$,
        \textbf{then} $\mathcal{C}^*\gets \mathcal{C}$
    \end{itemize}
  \item \textbf{return} $\mathcal{C^*}$
\end{enumerate}
\caption{\LCYCLECOVER{}}
\label{alg:main}
\end{algorithm}

\begin{theorem}\label{thm:cycle-cover}
  Given constants $L$ and $|\typeSet|$, Algorithm~\ref{alg:main} is a polynomial-time algorithm for $L$-\textsc{Cycle-Cover}. 
\end{theorem}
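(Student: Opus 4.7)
The plan is threefold: bound the number of multiplicity vectors $\vect{m}$ enumerated in the outer loop, show that each inner feasibility check runs in polynomial time, and then conclude correctness. First, $\cardinality{\FEASSET(L)} \le \sum_{\ell=2}^{L} \cardinality{\typeSet}^\ell \le L\cdot\cardinality{\typeSet}^L$, which is a constant by hypothesis. Since $\norm{\vect{m}}_V \le n$ and every cycle has length at least $2$, each coordinate $m_{\typeVec}$ lies in $\{0,1,\ldots,\lfloor n/2\rfloor\}$, so the outer loop iterates over at most $(n+1)^{\cardinality{\FEASSET(L)}}$ vectors, a polynomial in $n$ of constant degree.

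The hard part is the inner feasibility check, and this is where the main insight is needed. I would establish the following characterization: for any $\vect{m}$, a vertex-disjoint cycle cover $\mathcal{C}$ realizing it exists in $G$ if and only if, for every $\type \in \typeSet$, the demand $d_\type(\vect{m}) := \sum_{\typeVec \in \FEASSET(L)} m_{\typeVec}\cdot\cardinality{\{i : \type_i = \type\}}$ is at most the supply $n_\type$, defined as the number of vertices of $G$ of type $\type$. Necessity is immediate by counting slots, since each cycle slot consumes a distinct vertex of its prescribed type. For sufficiency, I would invoke the interchangeability observation the paper emphasizes: any two vertices of the same type have identical in- and out-neighborhoods under $f$, so any type-$\type$ vertex can fill any type-$\type$ slot of any $\typeVec \in \FEASSET(L)$. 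A cover can then be constructed greedily, assigning distinct unused type-$\type$ vertices to each type-$\type$ slot; the supply condition guarantees enough vertices of each type are available, and the resulting cycles are vertex-disjoint by construction. Verifying the $\cardinality{\typeSet}$ inequalities (and producing the cover when they hold) takes time polynomial in $n$ and in the constants $L$, $\cardinality{\typeSet}$, $\cardinality{\FEASSET(L)}$.

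Finally, for correctness, let $\mathcal{C}^{\mathrm{opt}}$ be an optimal $L$-cycle cover of $G$ with associated multiplicity vector $\vect{m}^{\mathrm{opt}}$. Then $\norm{\vect{m}^{\mathrm{opt}}}_V = \norm{\mathcal{C}^{\mathrm{opt}}}_V \le n$, so $\vect{m}^{\mathrm{opt}}$ is considered by the outer loop and is feasible by the characterization above; hence the returned $\mathcal{C}^*$ satisfies $\norm{\mathcal{C}^*}_V \ge \norm{\mathcal{C}^{\mathrm{opt}}}_V$ and is therefore optimal. Combining the polynomial enumeration bound with the polynomial-time inner check yields the claimed polynomial total runtime.
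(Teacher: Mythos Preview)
Your proposal is correct and follows essentially the same approach as the paper: bound $\cardinality{\FEASSET(L)}\le L\cdot\cardinality{\typeSet}^L$, enumerate at most $(n+1)^{\cardinality{\FEASSET(L)}}$ multiplicity vectors, reduce the inner feasibility check to the per-type supply/demand inequalities, and argue optimality by observing that the multiplicity vector of an optimal cover is enumerated. Your treatment of sufficiency for the feasibility characterization (the greedy slot-filling argument using interchangeability) is more explicit than the paper's, which simply asserts that checking the type counts suffices, but the underlying argument is the same.
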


\begin{proof}
We start by verifying that Algorithm~\ref{alg:main} is indeed optimal. 
Consider the optimal cycle cover $\mathcal{C}^*$. 
For each $\typeVec\in \FEASSET(L)$, let $m^*_{\typeVec}$ be the number of cycles in $\mathcal{C}^*$ 
that are consistent with the types in $\typeVec$. 
Clearly $\sum_{\typeVec\in \FEASSET(L)}m^*_{\typeVec} \cardinality{\typeVec} \leq n$, as there are only $n$ vertices.
Therefore, Algorithm~\ref{alg:main} considers the collection of numbers $m^*_{\typeVec}$ in Step~2. Because this collection of numbers does induce a valid cycle cover that is of the same size as $\mathcal{C^*}$, the algorithm would update its incumbent cycle cover if it were not already optimal. 

We next analyze the running time of the algorithm. First, note that it is straightforward to check whether the vector 
$\vect{m}$
induces a valid cycle cover. 
Since $\FEASSET(L)$ consists only of valid cycles according to the compatibility function $\CYCLEFEAS{}$, 
we just need to check that there are enough vertices of each type $\type \in \typeSet$ to construct all the cycles that require them. 
This can be checked individually for each $\type \in \typeSet$.
For a particular $\typeVec \in \FEASSET(L)$, the number of vertices of type $\type$ required is $m_{\typeVec}$ 
multiplied by the number of times
type $\type$ appears in $\typeVec$.
Then the sum of these products over all $\typeVec \in \FEASSET(L)$
is at most the number of vertices of type $\type$ in $G$.

Second, there is only a polynomial number of possibilities to construct a collection of numbers 
  $\vect{m} = \{m_{\typeVec}\}_{\typeVec\in\FEASSET(L)}$ 
such that 
  $\norm{\vect{m}}_V \le n$.
Indeed, this number is at most $(n+1)^{\cardinality{\FEASSET(L)}}$.
Moreover, $\cardinality{\FEASSET(L)} \le L\cdot \cardinality{\typeSet}^L$. 
Because $\cardinality{\typeSet}$ and $L$ are constants, $\cardinality{\FEASSET(L)}$ is also a constant. 
The expression $(n+1)^{\cardinality{\FEASSET(L)}}$ is therefore a polynomial in $n$. 
\end{proof}

Even for constant $L$, the running time of Algorithm~\ref{alg:main} is exponential in $\cardinality{\typeSet}$. 
But this is to be expected. Indeed, any graph can trivially be represented using a set $\typeSet$ of types of size $n$, where each vertex has a unique type, and a compatibility function $\CYCLEFEAS{}$ that assigns $1$ to an ordered pair of types if the corresponding edge exists in $G$. Therefore, if the running time of Algorithm~\ref{alg:main} were polynomial in $n$ \emph{and} $\cardinality{\typeSet}$, 
we would solve the general \LCYCLECOVER{} problem in polynomial time---and that problem is $\mathcal{NP}$-hard~\cite{Abraham07:Clearing}.

\subsection{Flipping Attributes is Also Easy (in Theory)}\label{sec:general-computation-cost}

While patients and donors in a kidney exchange are endowed with an initial set of attributes, it may be possible in practice to---at a cost---change some number of those attributes to effect change in the final matching.  For example, the human body naturally tries to reject, to varying degrees, a transplanted organ.  Due to this, nearly all recipients of kidneys are placed on immunosuppressant drugs after transplantation occurs.\footnote{{\tiny\texttt{https://www.kidney.org/atoz/content/immuno}}}  However, \emph{preoperative} immunosuppression can also be performed to increase transplant opportunity---but at some cost to the patient's overall health.

With this in mind, we extend the model of Section~\ref{sec:general-computation} as follows.  Associate with each pair of types $\theta, \theta' \in \Theta$ a cost function $c : \Theta \times \Theta \to \mathbb{R}$ representing the cost of changing a vertex of type $\theta$ to type $\theta'$.  Then, the \LFLIPCOVER{} problem is to find a disjoint packing of cycles of length at most $L$ that maximizes the size of the packing minus the sum of costs spent changing types.  Building on Theorem~\ref{thm:cycle-cover}, this problem is also solvable in polynomial time.

\begin{theorem}\label{thm:flip-and-cover}
Suppose that $L$ and $|\Theta|$ are constants. Then \LFLIPCOVER{} is solvable in polynomial-time.
\end{theorem}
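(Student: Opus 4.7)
The plan is to reduce \LFLIPCOVER{} to a polynomially large enumeration that combines a ``flipping plan'' with a cycle-cover selection, and then reuse the feasibility check from Theorem~\ref{thm:cycle-cover}.

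First, I would encode a flipping plan as a nonnegative integer matrix $F\in\Z_+^{|\typeSet|\times|\typeSet|}$, where $F_{\type,\type'}$ counts how many vertices whose initial type is $\type$ are relabeled to type $\type'$ (with $F_{\type,\type}$ counting ``unchanged'' vertices). The plan is feasible if $\sum_{\type'}F_{\type,\type'}=n_{\type}$ for every $\type$, where $n_{\type}$ is the number of vertices of type $\type$ in $G$. After applying $F$, the post-flip type counts are $n'_{\type}=\sum_{\type''}F_{\type'',\type}$, and the total flipping cost is $\sum_{\type,\type'}F_{\type,\type'}\,c(\type,\type')$. Observe that whether a vertex can actually be relabeled depends only on its type (identity is irrelevant), so any feasible $F$ is realizable.

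Next I would enlarge the enumeration of Algorithm~\ref{alg:main} to range over pairs $(F,\vect{m})$, where $F$ is a flipping plan and $\vect{m}\in\Z_+^{|\FEASSET(L)|}$ is a vector of cycle counts as before. The compatibility check is the same as in Theorem~\ref{thm:cycle-cover}, applied to the post-flip type counts $n'_{\type}$: for every $\type\in\typeSet$, the total number of slots of type $\type$ demanded by $\vect{m}$ must be at most $n'_{\type}$. If the check passes, the objective value of the pair is $\norm{\vect{m}}_V-\sum_{\type,\type'}F_{\type,\type'}\,c(\type,\type')$, and we keep the best pair seen. Optimality follows exactly as in Theorem~\ref{thm:cycle-cover}: any optimal solution to \LFLIPCOVER{} induces a feasible $(F^*,\vect{m}^*)$ of equal objective, and this pair is visited during the enumeration.

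For the running time, each entry of $F$ is an integer in $\{0,\dots,n\}$, so there are at most $(n+1)^{|\typeSet|^2}$ flipping plans, and as shown in the proof of Theorem~\ref{thm:cycle-cover} there are at most $(n+1)^{|\FEASSET(L)|}$ choices of $\vect{m}$ with $\norm{\vect{m}}_V\le n$, with $|\FEASSET(L)|\le L\cdot|\typeSet|^L$. Since $L$ and $|\typeSet|$ are constants, both bounds are polynomial in $n$, and the feasibility check and objective computation are likewise polynomial. The main conceptual step, rather than any intricate calculation, is the observation that collapsing the combinatorial structure of ``which vertex flips to which type'' down to the aggregate counts $F_{\type,\type'}$ is without loss of generality; once that reduction is in place, the rest is a direct extension of Algorithm~\ref{alg:main}.
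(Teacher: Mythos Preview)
Your proposal is correct and takes essentially the same approach as the paper: enumerate all aggregate ``flipping plans'' (the paper phrases this as choosing, for each ordered pair of types, how many vertices switch, yielding the same $(n+1)^{|\typeSet|^2}$ bound), and for each plan solve the resulting cycle-cover instance via Algorithm~\ref{alg:main}. The only cosmetic difference is that you fold the inner enumeration over $\vect{m}$ into a single joint enumeration over $(F,\vect{m})$, whereas the paper invokes Algorithm~\ref{alg:main} as a black box per flipping plan.
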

\begin{proof}[Proof sketch]
For any type $\theta \in \Theta$, there are $n_\theta$ vertices. Then, for each of the $(|\Theta|-1)$ choices of which type $\theta' \neq \theta$ to switch to, choose how many vertices from $\theta$ will switch to that type; there are at most $(n_\theta + 1)$ possibilities.  Do this for all types in $\Theta$, resulting in $$\prod_{\theta \in \Theta} (n_\theta + 1)^{|\Theta|-1}\leq (n+1)^{|\Theta|^2}$$ possibilities. Since $|\Theta|$ is a constant, this is polynomial in $n$. 

For each of these polynomially-many type-switch possibilities, we can compute the optimal cycle cover in polynomial time using Algorithm~\ref{alg:main}, and subtract $c(\theta,\theta')$ for each vertex that switched from $\theta$ to $\theta'$. Taking the best of these solutions gives the optimal solution in polynomial time. 
\end{proof}

\section{A Concrete Instantiation: Thresholding}
\label{sec:theory}
As motivated in Sections~\ref{sec:intro} and~\ref{sec:model}, compatibility in real kidney exchange graphs is determined by patient and donor attributes, such as blood or tissue type.  In particular, if an attribute for a donor and patient is in conflict, they are deemed incompatible. Motivated by that reality, in this section, we associate with each patient and donor a bit vector of length $k$, and count incompatibilities based on any shared activated bits between a patient and potential donor.

As a concrete example, consider human blood types. 
At a high level, human blood contains A antigens, B antigens, both (type AB), or neither (type O).  
AB-type patients can receive from any donor, 
A-type (B-type) can receive from O-type and A-type (B-type) donors, 
and O-type patients can only receive from O-type donors.  
In our bit model, this is represented with $k=2$.
The first bit represents compatibility with A antigens and the second bit represents compatibility with B antigens.
Thus, the type space 
  $\Theta = 2^{\{\texttt{has-A},\texttt{has-B}\}} \times 2^{\{\texttt{no-A},\texttt{no-B}\}}$; in general, $|\Theta| = 2^{2k}$.

Formally, unless otherwise stated, throughout this section $G$ will refer to a directed graph
with vertex set $V = [n] := \{1,\ldots,n\}$ and edge set $E$, 
and with each $i \in V$ associated with two $k$-bit vectors $\vect{d}_i, \vect{p}_i \in \{0,1\}^k$.
Let 
  $\ConflictSet_d(i) = \{q \in [k] \suchthat \vect{d}_{iq} = 1\}$
be the set of conflict bits for the donor associated with vertex $i \in V$, and similarly let 
  $\ConflictSet_p(i) = \{q \in [k] \suchthat \vect{p}_{iq} = 1\}$.
For $i,j \in V$ such that $i \ne j$, the threshold feasibility function $\fthresh{t}$ is defined as
  \begin{align*}
    \fthresh{t}(\vect{d}_i,\vect{p}_j) =
      \begin{cases}
        1 & \mbox{if $\abs{\ConflictSet_d(i) \cap \ConflictSet_p(j)} \le t$,} \\
        0 & \mbox{otherwise.}
      \end{cases}.
  \end{align*}
Note that $\abs{\ConflictSet_d(i) \cap \ConflictSet_p(j)} \le t$ if and only if $\langle \vect{d}_i, \vect{p}_j \rangle \le t$.

Kidney exchange graphs constructed using threshold compatibility functions 
are closely related to complements of \emph{intersection graphs}~\cite{McKMcM99}, 
which are graphs that have a set associated with each vertex and an edge between two vertices if and only if the sets intersect.
Given $t \in \mathbb{N}$,
the function $\fthresh{t}$ is related to \emph{$p$-intersection graphs}~\cite{ChuWes94,EatGouRod96},
where an edge connects two vertices if their corresponding sets intersect in at least $p \ge 1$ elements.
%

In particular, our model is similar to that of \emph{intersection digraphs}~\cite{SenDasRoyWes89}, 
or equivalently \emph{bipartite intersection graphs}~\cite{HarKabMcM82}, 
both also considered in~\cite{Orlin77}.
These have mainly been studied under the assumption that the sets used to represent the graph have the ``consecutive ones'' property,
i.e., each set is an interval from the set of integers.
Our model is more general: we do not place such an assumption on the set of conflict bits.
Moreover, most treatments of intersection digraphs consider loops on the vertices,
whereas in our thresholding model, whether or not donor $i$ and patient $i$ are compatible is not considered.
In addition, the directed and bipartite intersection graph literature has focused on the case that $t = 0$ (in our terminology).
To the best of our knowledge, this paper is the first treatment \emph{$p$-intersection digraphs}, and certainly their first real-world application.


\subsection{Existence of Small Representations}
It is natural to ask for what values of $t$ and $k$ we can select vertices with 
bit vectors $\vect{d}_i$ and $\vect{p}_i$ of length $k$ such that $\THRESHOLDt{t}$ can create \emph{any} graph of a specific size? 

Formally, we say that 
  $G$
  is \emph{$(k,t)$-representable} (by feasibility function $\fthresh{t}$)
  if, for all $i \in V$ there exist $\vect{d}_i, \vect{p}_i \in \{0,1\}^k$ such that for all $j_1\in V$, $j_2\in V\setminus\{j_1\}$, $(j_1,j_2)\in E$ if and only if $\fthresh{t}(\vect{d}_{j_1},\vect{p}_{j_2}) = 1$. 

It is known~\cite{ErdGooPos66} that any graph can be represented as an intersection graph with $k \le n^2 / 4$.
Yet, we show next that, in our model, $k \le n$ suffices to represent any graph. It is akin to a result on the \emph{term rank} of the adjacency matrix of $\overline G$~\cite[Thm~6.6]{Orlin77}.

\begin{theorem}\label{thm:k-upper-bound}
  Let $G = (V,E)$ be a digraph on $n$ vertices.
  Let $n_1$ be the number of vertices with outgoing edges, 
  Let $n_2$ be the number of vertices with incoming edges, and $n' = \min\{n_1+1,n_2+1,n\}$.
  Then $G$ can be $(n',0)$-represented.
\end{theorem}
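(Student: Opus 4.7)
The plan is to view a $(k,0)$-representation as a kind of biclique cover of the non-edges of $G$. For each bit $q \in [k]$, let $D_q = \{i \suchthat \vect{d}_{iq} = 1\}$ and $P_q = \{j \suchthat \vect{p}_{jq} = 1\}$. Then $\fthresh{0}(\vect{d}_i, \vect{p}_j) = 0$ iff there is some $q$ with $i \in D_q$ and $j \in P_q$. Thus $G$ is $(k,0)$-representable precisely when there exist sets $\{(D_q, P_q)\}_{q=1}^k$ whose union $\bigcup_q D_q \times P_q$ contains every ordered non-edge $(i,j)$ with $i \ne j, (i,j) \notin E$, and avoids every edge. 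Under this reformulation, the task is to exhibit three representations of sizes $n_1+1$, $n_2+1$, and $n$, and then take whichever is smallest.

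First I would establish the $n_1+1$ bound. Let $S \subseteq V$ be the $n_1$ vertices with at least one outgoing edge. Introduce one bit $b_i$ for each $i \in S$ and one additional ``default'' bit $b_\star$. Set $\vect{d}_{i,b_i} = 1$ for $i \in S$, set $\vect{d}_{i,b_\star} = 1$ for $i \notin S$, and set all other donor coordinates to $0$. For each $i \in S$ and each $j \ne i$, set $\vect{p}_{j,b_i} = 1$ iff $(i,j) \notin E$; and set $\vect{p}_{j,b_\star} = 1$ for every $j$. Verification is by cases: if $i \in S$ and $j \ne i$, then $\vect{d}_i$ is supported only on $b_i$, so $\langle \vect{d}_i, \vect{p}_j \rangle = \vect{p}_{j,b_i}$, which is $0$ iff $(i,j) \in E$; if $i \notin S$ and $j \ne i$, then the default bit alone forces $\langle \vect{d}_i, \vect{p}_j \rangle \ge 1$, correctly ruling out every outgoing edge from such an $i$. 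This produces an $(n_1+1,0)$-representation; note that when $n_1 = n$ the default bit is redundant and we obtain an $(n,0)$-representation instead.

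Second, a symmetric construction exchanging the roles of donors and patients (introducing one bit for each vertex with an incoming edge, plus a default bit carried on all donors) gives an $(n_2+1,0)$-representation, again trimmed to $n$ bits when $n_2 = n$. Third, the one-bit-per-vertex construction $\vect{d}_{i,b_i} = 1$, $\vect{p}_{j,b_i} = 1$ iff $j \ne i$ and $(i,j) \notin E$, yields an $(n,0)$-representation directly. Choosing whichever of the three uses fewest bits gives an $(n',0)$-representation for $n' = \min\{n_1+1, n_2+1, n\}$.

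The only delicate point, which I expect to be the main obstacle, is handling vertices of zero out-degree (and symmetrically zero in-degree): giving such a vertex the all-zero donor vector would make $\fthresh{0}$ fire on every pair, the opposite of what is needed. The single ``default'' bit $b_\star$ resolves this in a uniform way and is exactly the $+1$ appearing in $n_1+1$ and $n_2+1$; once this is dispatched, the remaining verification that edges and non-edges are correctly separated is mechanical.
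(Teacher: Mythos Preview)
Your proposal is correct and follows essentially the same approach as the paper: assign each vertex with outgoing edges its own standard basis donor vector, pool the remaining vertices onto one extra ``default'' coordinate, and set patient bits to record non-edges; then argue symmetrically for $n_2+1$. Your biclique-cover framing and the separately stated $(n,0)$ construction are minor presentational additions, but the core construction and verification coincide with the paper's proof.
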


\begin{proof}
%
  We first show that the graph can be $(n_1+1,0)$-represented.
  Assume without loss of generality that vertices $1,\ldots,n_1$ have outgoing edges.
  We show how to set $\vect{d}_i, \vect{p}_i \in \{0,1\}^{n_1+1}$ for each vertex $i$ in $V$.
  To set the donor attributes, for each $i \in [n_1]$, let $\vect{d}_i$ be $e_i$, the $i$th standard basis vector, 
  i.e., the vector of length $n_1+1$ with a $1$ in the $i$th coordinate and $0$ everywhere else.
  For $i > n_1$, set $\vect{d}_i$ to be $e_{n_1+1}$.
  For the patient attributes of vertex $j \in [n]$,
  for each $i \in [n]$ such that $(i,j) \in E$, set $\vect{p}_{ji} = 0$, and set $\vect{p}_{ji} = 1$ otherwise.
  Note that if all the vertices have outgoing edges, then $n_1 = n$ unit vectors suffice.
  A similar approach works to $(\min\{n,n_2+1\},0)$-represent $G$, by using the $n_2$ unit vectors 
  as the patient vectors of those vertices with incoming edges, and (if needed) one additional unit vector for any remaining vertices.
  In both of these cases, $\langle \vect{d}_i, \vect{p}_j \rangle = 0$ if and only if $(i,j) \in E$, which represents $G$ by $\THRESHOLDt{0}$.
\end{proof}

In particular, Theorem~\ref{thm:k-upper-bound} implies that any graph is $(n,0)$-representable. The next theorem shows a matching lower bound.
The same construction and bound also hold if loops are considered~\cite{SenDasRoyWes89}.

\begin{theorem}
\label{thm:not0impl}
  For any $n \ge 3$, there exists a graph on $n$ vertices that is not $(k,0)$-representable for all $k < n$.
\end{theorem}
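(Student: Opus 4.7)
The plan is to exhibit a concrete digraph whose off-diagonal non-edges force any representation to use at least $n$ bits, by reformulating $(k,0)$-representability as a biclique cover problem.

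First, I would take $G$ to be the digraph on $V=[n]$ whose arc set consists of every ordered pair $(i,j)$ with $i\neq j$ \emph{except} the arcs of a directed Hamiltonian cycle $C=\{(1,2),(2,3),\ldots,(n-1,n),(n,1)\}$ (indices read mod $n$). Thus the set of non-edges among distinct pairs is exactly $C$, and $\cardinality{C}=n$.

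Next, I would translate any candidate $(k,0)$-representation into a collection of bicliques. Suppose $\vect{d}_i,\vect{p}_i\in\{0,1\}^k$ realize $G$ via $\fthresh{0}$. For each $q\in[k]$, set $A_q=\{i \suchthat q\in \ConflictSet_d(i)\}$ and $B_q=\{j \suchthat q\in\ConflictSet_p(j)\}$. Since $\fthresh{0}(\vect{d}_i,\vect{p}_j)=0$ iff $\ConflictSet_d(i)\cap\ConflictSet_p(j)\neq\emptyset$, the relation
\[
(i,j)\notin E \iff (i,j)\in\bigcup_{q=1}^{k}A_q\times B_q
\]
holds for every $i\neq j$. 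Thus each arc of $C$ must lie in some biclique $A_q\times B_q$, while any \emph{off-diagonal} pair contained in such a biclique must itself be an arc of $C$ (diagonal pairs $(i,i)$ carry no constraint).

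The heart of the argument is the claim that every biclique $A_q\times B_q$ contains at most one arc of $C$. Suppose, for contradiction, that two distinct arcs $(i,i+1)$ and $(j,j+1)$ both lie in $A_q\times B_q$, with $i\neq j$. Then $i,j\in A_q$ and $i+1,j+1\in B_q$, so also $(i,j+1)\in A_q\times B_q$. Either $(i,j+1)$ is diagonal, forcing $i\equiv j+1\pmod n$, or it is an off-diagonal pair hence an arc of $C$, forcing $j+1\equiv i+1\pmod n$ and therefore $i=j$, which is excluded. Thus $i\equiv j+1\pmod n$. The symmetric analysis of $(j,i+1)\in A_q\times B_q$ yields $j\equiv i+1\pmod n$. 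Combining the two congruences gives $2\equiv 0\pmod n$, impossible for $n\geq 3$.

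Concluding: the $n$ arcs of $C$ lie in pairwise distinct bicliques, so $k\geq n$, which shows $G$ is not $(k,0)$-representable for any $k<n$. The main obstacle is really just the case analysis in the previous paragraph; the subtlety is to keep track of the fact that diagonal entries inside a biclique are "free", so the contradiction must be extracted from the off-diagonal crossings $(i,j+1)$ and $(j,i+1)$ generated by two hypothetical arcs of $C$ in the same biclique — which is exactly where the hypothesis $n\geq 3$ is used.
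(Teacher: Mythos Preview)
Your proof is correct and follows essentially the same route as the paper: the same digraph (the complement of a directed $n$-cycle) and the same contradiction extracted from the cross-terms $(i,j{+}1)$ and $(j,i{+}1)$ when a single bit would witness two non-edges. Your biclique-cover formulation is a clean restatement of the paper's direct conflict-bit argument, and your explicit handling of the diagonal (self-loop) case is if anything slightly more careful than the paper's version.
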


\begin{proof}
  Define $G$ to be the digraph on $n$ vertices, $V = [n]$,
  with an edge from vertex $i$, for each $i \in V$, to every vertex except $i-1$ (and itself),
  where vertex $n$ is also referred to as vertex $0$.

  Assume that $G$ is $(k,0)$-representable, and 
  consider vertex $1$. 
  Since $(1,n) \notin E$, and $(i,n) \in E$ for all $i \notin \{1,n\}$, 
  there exists a conflict bit $q_1 \in \ConflictSet_d(1) \cap \ConflictSet_p(n)$ such that $q_1 \notin \ConflictSet_p(V \setminus \{1,n\})$.
  More generally, there exists such a conflict bit $q_i$ for all $i \in V$.

  We claim that these conflict bits are all unique, which directly implies that $k \ge n$. 
  Indeed, otherwise we can assume that $q_1 = q_i$ for some $i \ne 1$ 
  (without loss of generality, as the graph is symmetric subject to cyclic permutations).
  But then $(1,i-1)$ and $(i,n)$ do not appear as edges in $G$, which is not true for any $i \in V \setminus \{1\}$.
\end{proof}

More generally, it is easy to see that any graph that is $(k,0)$-representable is also $(k+t,t)$-representable for any $t \ge 0$. 
Indeed, simply take the $(k,0)$-representation of the graph, and append $t$ ones to every vector. 
Together with Theorem~\ref{thm:k-upper-bound}, this shows that any graph is $(n+t,t)$-representable. 
However, the lower bound given by Theorem~\ref{thm:not0impl} does not extend to $t>0$. 
We conjecture that for any $n$ and $t$ there exists a graph that can only be $(k,t)$-represented with 
$k=\Omega(n)$---this remains an open question.

\subsection{Computational Issues}

Given a real compatibility graph with $n$ vertices, 
we know by Theorem~\ref{thm:k-upper-bound} that we can $(k,0)$-represent that graph for $k=n$.  
But, in practice, how large of a $k$ is actually needed?

Various problems related to intersection graphs are $\mathcal{NP}$-complete
for general graphs~\cite{KouStoWon78,Orlin77},
but we work in a setting with additional structure.
And while we do not show that finding a $(k,t)$-representation is $\mathcal{NP}$-hard,
we do show that a slightly harder problem, which we refer to as 
{\sc $(k,t)$-Representation-with-Ignored-Edges}, is $\mathcal{NP}$-hard. 
Given an input of a directed graph $G = (V,E)$, a subset $F$ of $\binom{V}{2}$, and integers $k \ge 1$ and $t \ge 0$,
this problem asks whether there exist bit vectors $\vect{d}_i$ and $\vect{p}_i$ of length $k$ for each $i \in V$
such that for any $(i,j) \in F$, we have $(i,j) \in E$ if and only if $\langle \vect{d}_i, \vect{p}_j \rangle \le t$.

\begin{theorem}\label{thm:np-hardness-main-paper}
  The {\sc $(k,t)$-Representation-with-Ignored-Edges} problem is $\mathcal{NP}$-complete.
\end{theorem}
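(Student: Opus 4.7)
The plan is to verify $\mathcal{NP}$ membership in one line and then show $\mathcal{NP}$-hardness already for the restricted case $t = 0$; since $t$ is part of the input, this suffices. Membership is clear: guess the vectors and, for each of the polynomially many pairs in $F$, check $\langle \vect{d}_i, \vect{p}_j \rangle \le t$ against $(i,j) \in E$.

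For hardness, I would reduce from \textsc{Set Basis} (Stockmeyer), equivalently bipartite biclique cover or Boolean rank decision (Orlin). An instance provides $S_1,\ldots,S_m \subseteq [n]$ and $k \in \mathbb{N}$, and asks whether there exist $B_1,\ldots,B_k \subseteq [n]$ such that every $S_i$ is a union of a sub-collection of the $B_q$. I would build a bipartite digraph on $V = \{u_1,\ldots,u_m\} \cup \{w_1,\ldots,w_n\}$ with edge $(u_i, w_j)$ iff $j \notin S_i$, set $F = \{(u_i, w_j) : i \in [m],\, j \in [n]\}$, and ask whether $(G,F,k,0)$ is a YES-instance. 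Restricting $F$ to the $u$-to-$w$ pairs is precisely why the ignored-edges framing is the right target: the vectors $\vect{p}_{u_i}$ and $\vect{d}_{w_j}$ and all other orderings are unconstrained and may be set arbitrarily, so the instance collapses to asking whether the $m \times n$ Boolean matrix $M$ with $M_{ij} = 1 \Leftrightarrow j \in S_i$ factors as $D\,P^\top$ under Boolean arithmetic with inner dimension $k$.

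The forward direction of correctness should be straightforward: from a basis $B_1,\ldots,B_k$ I would set $\vect{d}_{u_i,q} = 1 \Leftrightarrow B_q \subseteq S_i$ and $\vect{p}_{w_j,q} = 1 \Leftrightarrow j \in B_q$, and verify $\langle \vect{d}_{u_i}, \vect{p}_{w_j} \rangle = 0 \Leftrightarrow j \notin S_i$. I expect the converse to be the main obstacle: given a valid $(k,0)$-representation, define $B_q = \{j : \vect{p}_{w_j,q} = 1\}$ and show $S_i = \bigcup_{q \,:\, \vect{d}_{u_i,q}=1} B_q$ for every $i$. One inclusion is immediate ($j \in S_i$ implies $(u_i,w_j) \notin E$, forcing some coordinate $q$ with $\vect{d}_{u_i,q} = \vect{p}_{w_j,q} = 1$); the subtler inclusion requires arguing that whenever $\vect{d}_{u_i,q} = 1$, \emph{every} $j' \in B_q$ lies in $S_i$, which will follow because $\vect{p}_{w_{j'},q} = 1$ makes $\langle \vect{d}_{u_i}, \vect{p}_{w_{j'}} \rangle \ge 1$ and hence $(u_i,w_{j'}) \notin E$. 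Once both inclusions are in hand, the reduction is clearly polynomial-time and the theorem follows.
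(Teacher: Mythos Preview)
Your argument is correct, and it is genuinely different from---and considerably shorter than---the paper's own proof. The paper reduces from \textsc{3SAT} with threshold $t=1$: it first builds a ``bit-grounding'' gadget $G_k$ on $\binom{k}{2}+k$ vertices whose $(k,1)$-representation is unique up to permutation, then attaches variable and clause vertices so that the donor vector of a distinguished vertex encodes a satisfying assignment. Establishing the gadget's uniqueness requires a separate lemma and a careful counting argument.

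Your route via \textsc{Set Basis} / Boolean rank at $t=0$ sidesteps all of that machinery: because the target problem is, once $F$ is restricted to one bipartite direction, literally the decision version of Boolean factorization of the complement incidence matrix, the reduction is almost definitional. Both inclusions in your converse go through exactly as you sketch. The trade-off is that the paper's construction exhibits hardness at the specific threshold $t=1$ and (via the gadget) pins down bits explicitly, which may be of independent interest for the thresholding model; your proof gives hardness at $t=0$ and relies on the classical Stockmeyer/Orlin result rather than being self-contained from \textsc{3SAT}. For the theorem as stated, with $t$ part of the input, your approach is entirely sufficient and arguably cleaner.
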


The theorem's nontrivial proof is relegated to Appendix~\ref{app:proofs}.
Here we give a proof sketch.
One major idea is the construction of a \emph{bit-grounding gadget} $G_k$---a subgraph where the bits are set uniquely (up to permutations) in any valid representation, and can be used to set the bits in other vertices. The gadget has $\binom{k}{2}$ vertices; we prove that there is a unique (up to permutations) $(k,1)$-representation of $G_k$, where each donor vector has a unique pair of ones, and similarly for each patient vector. 
Figure~\ref{fig:gadget} shows $G_4$. 

Then, we prove $\mathcal{NP}$-hardness by reduction from 3SAT. In the constructed instance of our problem, we set the threshold to $1$. The crux of the reduction is to add a vertex for each clause in the given 3SAT formula, where in the patient vector, the bit corresponding to each literal in the clause is set to $1$. This can be done by connecting the vertex to the bit-grounding gadget. Moreover, there is a special vertex that does not have outgoing edges to any of the clause vertices. This means that it must have a $1$ in a position that corresponds to one of the literals in each clause. A different part of the construction ensures that there is at most a single $1$ in the two positions corresponding to a variable and its negation. Therefore, a valid assignment of the donor bits corresponds to a satisfying assignment for the 3SAT formula. 

\begin{figure}[htp]
  \begin{center}
  \begin{tikzpicture}[scale=1.0]
  \tikzstyle{ns} = [draw,shape=circle,outer sep=0,inner sep=1,minimum size=12,font=\small]
  \tikzstyle{nsquare} = [draw,shape=rectangle,outer sep=0,inner sep=1,minimum size=12,font=\small]
  
  \node [ns] (123) at (0,-1.5) {$1$};
  \node [ns] (145) at (1,0.8) {$2$};
  \node [ns] (246) at (-0.5,-0.5) {$3$};
  \node [ns] (356) at (0,1.5) {$4$};
  \node [nsquare,label={[label distance=0cm,align=left]90:{$d_1:1100$\\$p_1:1010$}}] (1) at (-1,2) {$1$};
  \node [nsquare,label={[label distance=0cm,align=left]90:{$d_2:1010$\\$p_2:1001$}}] (2) at (1,2) {$2$};
  \node [nsquare,label={[label distance=0cm,align=left]0:{$d_3:1001$\\$p_3:0110$}}] (3) at (2,0) {$3$};
  \node [nsquare,label={[label distance=0cm,align=left]270:{$d_4:0110$\\$p_4:0101$}}] (4) at (1,-2) {$4$};
  \node [nsquare,label={[label distance=0cm,align=left]270:{$d_5:0101$\\$p_5:0011$}}] (5) at (-1,-2) {$5$};
  \node [nsquare,label={[label distance=0cm,align=left]180:{$d_6:0011$\\$p_6:1100$}}] (6) at (-2,0) {$6$};
  \draw  (1) edge[->,dotted] node {} (6);
  \draw  (2) edge[->,dotted] node {} (1);
  \draw  (3) edge[->,dotted] node {} (2);
  \draw  (4) edge[->,dotted] node {} (3);
  \draw  (5) edge[->,dotted] node {} (4);
  \draw  (6) edge[->,dotted] node {} (5);
  
  \draw  (123) edge[<-,dotted] node {} (4);
  \draw  (123) edge[<-,dotted] node {} (5);
  \draw  (123) edge[<-,dotted] node {} (6);
  
  \draw  (145) edge[<-,dotted] node {} (2);
  \draw  (145) edge[<-,dotted] node {} (3);
  \draw  (145) edge[<-,dotted] node {} (6);
  
  \draw  (246) edge[<-,dotted] node {} (1);
  \draw  (246) edge[<-,dotted] node {} (3);
  \draw  (246) edge[<-,dotted] node {} (5);
  
  \draw  (356) edge[<-,dotted] node {} (1);
  \draw  (356) edge[<-,dotted] node {} (2);
  \draw  (356) edge[<-,dotted] node {} (4);
  \end{tikzpicture}
  \end{center}
  \caption{Gadget $G_4$ with a subset of \emph{non}-edges shown; all edges between circle vertices are also not in $E$.}
  \label{fig:gadget}
\end{figure}
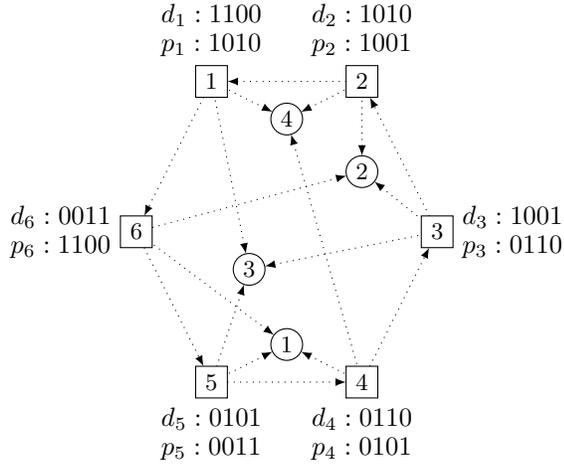

\section{Computing Small Representations of Real Kidney Exchange Compatibility Graphs}\label{sec:exp}

In this section, we test our hypothesis that real compatibility graphs can be represented by a substantially smaller number of attributes than required by the worst-case theoretical results of Section~\ref{sec:theory}.  We begin by presenting general math programming techniques to determine, given $k,t\in\mathbb{Z}$, whether a specific graph $G=(V,E)$ is $(k,t)$-representable.  We then show on real and generated compatibility graphs from the UNOS US-wide kidney exchange that small $k$ suffices for $(k,0)$-representation, and conclude by exploring the allowance of greater thresholds $t$ on match size.  Even small thresholds $t>0$ result in substantial societal gain.\footnote{%
All code for this section can be found at \url{https://github.com/JohnDickerson/KidneyExchange}.%
}

\subsection{Mathematical Programming Formulations}
Implementation of $\fthresh{t}$ can be written succinctly as a quadratically-constrained discrete feasibility program (QCP) with $2k|V|$ binary variables, given as~\ref{eq:qcp} below.

{\small%
\begin{equation}\label{eq:qcp}
\begin{array}{rrr}
  & \langle \vect{d}_i, \vect{p}_j \rangle \leq t & \forall (v_i, v_j) \in E \\
  & \langle \vect{d}_i, \vect{p}_j \rangle \ge (t+1) & \forall (v_i, v_j) \not\in E \\
  & \vect{d}_i, \vect{p}_i \in \{0,1\}^k & \forall v_i \in V
\end{array}\tag{M1}
\end{equation}
}

The constraint matrix for this program is not positive semi-definite, and thus the problem is not convex.  Exploratory use of heuristic search via state-of-the-art integer nonlinear solvers~\cite{Bonami08:Algorithmic} resulted in poor performance (in terms of runtime and solution quality) on even small graphs.  With that in mind, and motivated by the presence of substantially more mature integer \emph{linear} program (ILP) solvers, we linearize~\ref{eq:qcp}, presented as~\ref{eq:ilp} below.

{\small%
\begin{equation}\label{eq:ilp}
\!\!\begin{array}{rrr}
  \min        & \sum_{v_i \in V} \sum_{v_j \neq v_i \in V} \xi_{ij}  & \\
  \text{s.t.} & d^q_i \geq c^q_{ij} \land p^q_j \geq c^q_{ij} & \forall v_i \neq v_j \in V, q \in [k] \\
  & d^q_i + p^q_j \leq 1 + c^q_{ij} & \forall v_i \neq v_j \in V, q \in [k] \\
  & \sum_{q} c^q_{ij} \leq t + (k-t)\xi_{ij} & \forall (v_i, v_j) \in E \\
  & \sum_{q} c^q_{ij} \geq (t+1)\xi_{ij} & \forall (v_i, v_j) \in E \\
  & \sum_{q} c^q_{ij} \geq t+1 - k\xi_{ij} & \forall (v_i, v_j) \not\in E \\
  & \sum_{q} c^q_{ij} \leq k - (k-t)\xi_{ij} & \forall (v_i, v_j) \not\in E \\
  & d^{q}_i, p^{q}_i \in \{0,1\} & \forall v_i \in V, q \in [k] \\
  & c^{q}_{ij}, \xi_{ij} \in \{0,1\} & \forall v_i \neq v_j \in V, q \in [k]
\end{array}\tag{M2}
\end{equation}
}

\ref{eq:ilp} generalizes~\ref{eq:qcp}; while~\ref{eq:qcp} searches for a feasible solution to the $(k,t)$-representation problem,~\ref{eq:ilp} finds the ``best'' (possibly partially-incorrect) solution by minimizing the total number of edges that exist in the solution but not in the base graph $G$, or do not exist in the solution but do in $G$.  This flexibility may be desirable in practice to strike a tradeoff between small $k$ and accuracy of representation.

Interestingly, neither the fully general ILP nor its (smaller) instantiations for the special cases of feasibility and/or threshold $t=0$ were solvable by a leading commercial ILP solver~\cite{CPLEX12.6} within $12$ hours for even small graphs, primarily due to the model's loose LP relaxation.  Indeed, the model we are solving is inherently logical, which is known to cause such problems in traditional mathematical programming~\cite{Hooker02:Logic}.  With that in mind, we note that the special case of $t=0$ can be represented compactly as a satisfiability (SAT) problem in conjunctive normal form, given below as~\ref{eq:cnf-sat}.

{\small%
\begin{equation}\label{eq:cnf-sat}
    \begin{array}{rr}
      \bigwedge\limits_{q \in [k]} (\neg d_i^q \lor \neg p_j^q)   & \forall (v_i, v_j) \in E \\
      \begin{array}{c}
        (z^1_{ij} \lor z^2_{ij} \lor \ldots \lor z^k_{ij}) \ \land \\ \bigwedge\limits_{q \in [k]}\left[ (\neg z^q_{ij} \lor d_i^q) \land (\neg z^q_{ij} \lor p_j^q) \right]
      \end{array} & \forall (v_i, v_j) \not\in E \\
  \end{array}\tag{M3}
\end{equation}
}

This formulation maintains two sets of clauses: the first set enforces no bit-wise conflicts for edges in the underlying graph, while the second set enforces at least one conflict via $k$ auxiliary variables $z^{\cdot}_{ij}$ for each possible edge $(v_i,v_j)\not\in E$.  \ref{eq:cnf-sat} was amenable to parallel SAT solving~\cite{Biere14:Yet}.  Next, we present results on real graphs with this formulation.

\subsection{$(k,0)$-representations of Real Graphs}
Can real kidney exchange graphs be represented by a small number of attributes?  To answer that question, we begin by testing on real match run data from the first two years of the United Network for Organ Sharing (UNOS) kidney exchange, which now contains \num{143} transplant centers, that is, 60\% of all transplant centers in the US.  We translate each compatibility graph into a CNF-SAT formulation according to~\ref{eq:cnf-sat}, and feed that into a SAT solver~\cite{Biere14:Yet} with access to \num{16}GB of RAM, \num{4} cores, and \num{60} minutes of wall time.  (Timeouts are counted---conservatively against our paper's qualitative message---as negative answers.)  

Figure~\ref{fig:phase-transition-frac-k} shows a classical phase transition from unsatisfiability to satisfiability as $k$ increases as a fraction of graph size, as well as an associated substantial increase in computational intractability centered around that phase transition.  This phenomenon is common to many central problems in artificial intelligence~\cite{Cheeseman91:Where,Hogg96:Phase,Walsh11:Where}.  Indeed, we see that substantially fewer than $|V|$ attributes are required to represent real graphs; compare with the lower bound of Theorem~\ref{thm:not0impl}.

\begin{figure}[ht!bp]
  \centering
  \includegraphics[width=\linewidth]{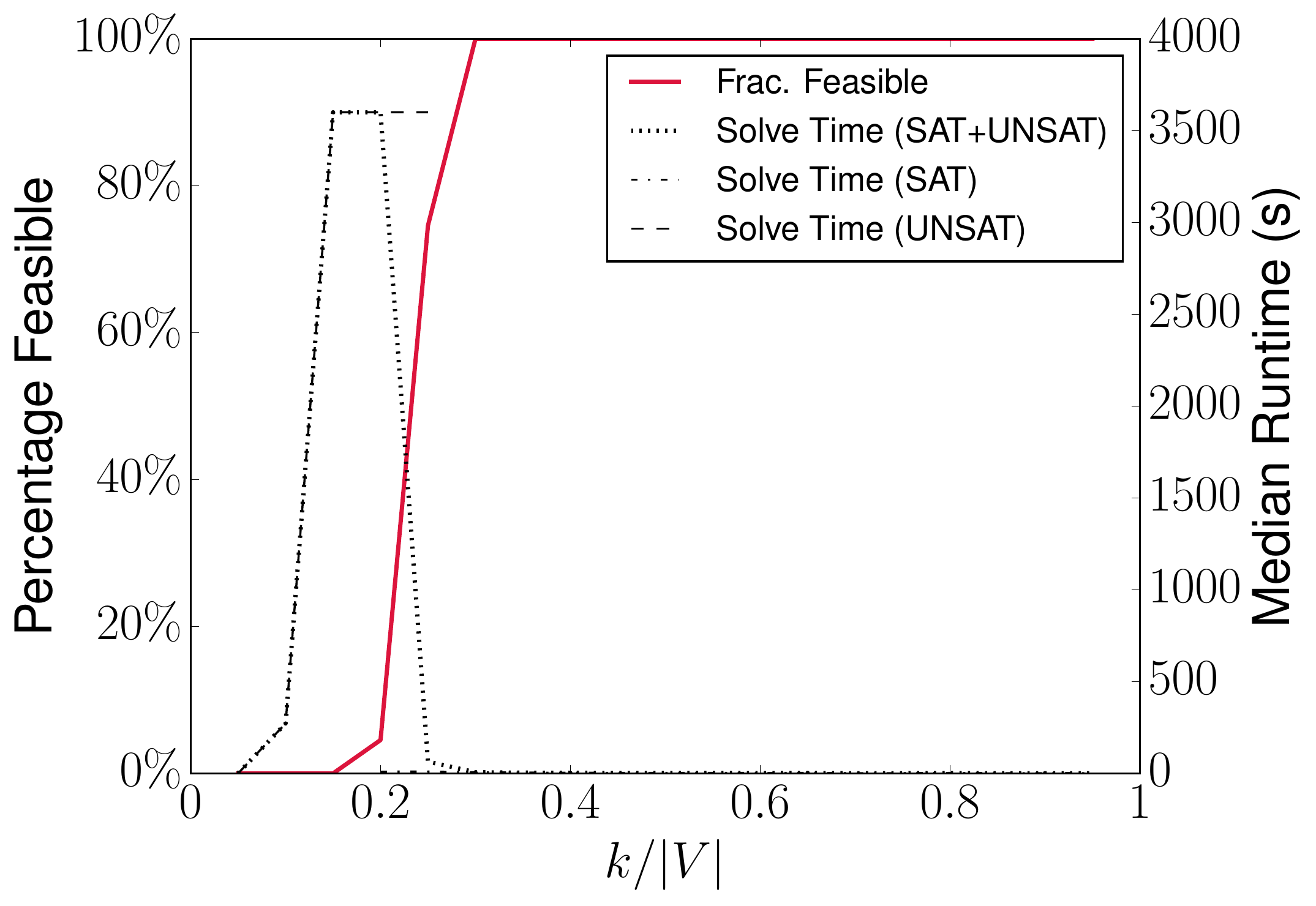}
  \caption{Classical hardness spike near the phase transition for $(k,0)$-representation on real UNOS graphs.}
  \label{fig:phase-transition-frac-k}
\end{figure}

Figure~\ref{fig:sat-by-size} explores the minimum $k$ required to represent each graph as a function of $|V|$, compared against the theoretical upper bound of $|V|$.  The shaded area represents those values of $k$ where the SAT solver timed out; thus, the reported values of $k$ are a conservative \emph{upper} bound on the required minimum, which is still substantially lower than $|V|$.

\begin{figure}[ht!bp]
  \centering
  \includegraphics[width=\linewidth]{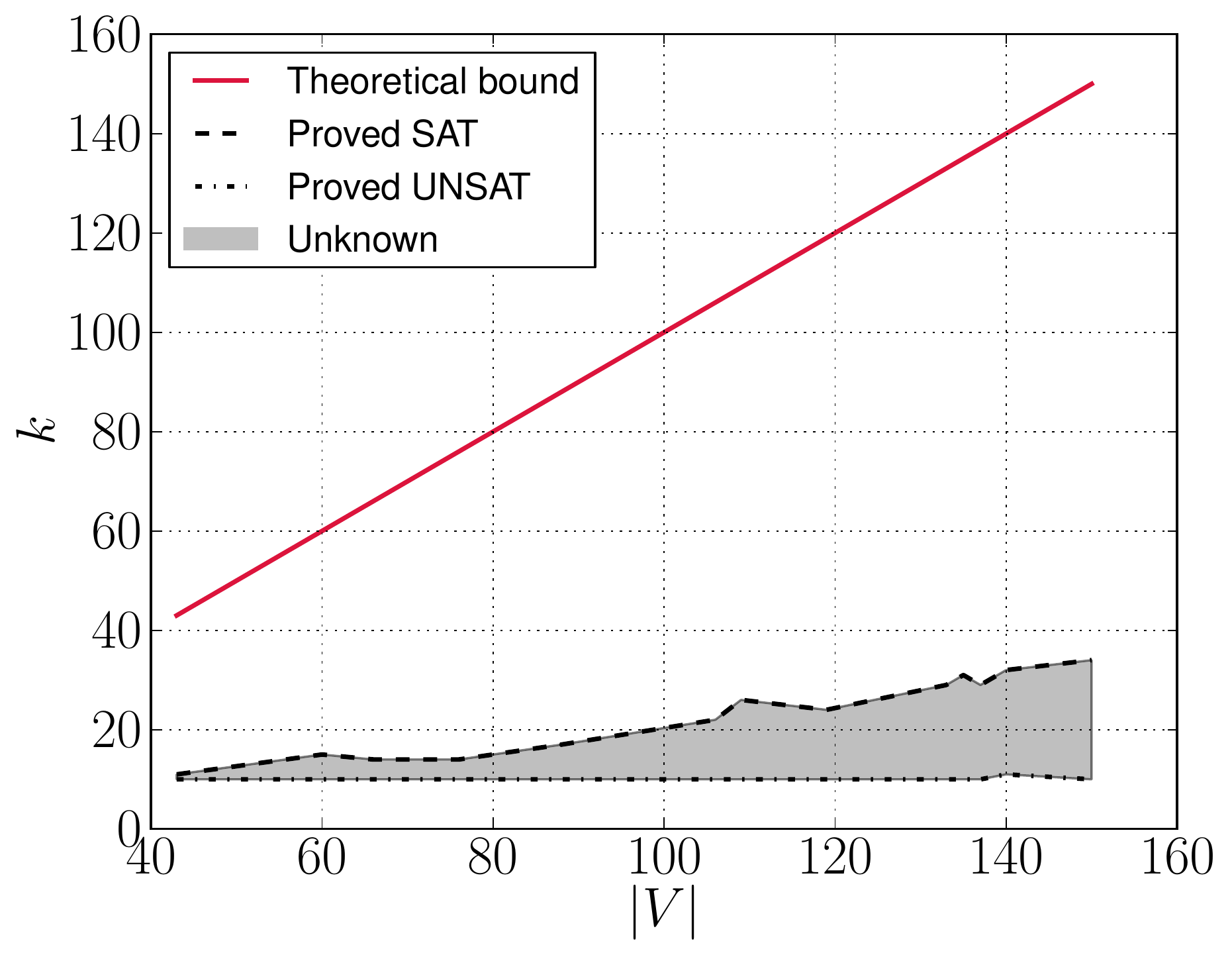}
  \caption{Comparison of number of bits (y-axis) required to $(k,0)$-represent real UNOS compatibility graphs of varying sizes (x-axis).  The theoretical bound of $k=|V|$ is shown in red; it is substantially higher than the conservative upper bound of $k$ solved by our SAT solver (upper dotted line).}
  \label{fig:sat-by-size}
\end{figure}

\subsection{Thresholding Effects on Matching Size}
A motivation of this work is to provide a principled basis for optimally ``flipping bits'' of participants (via, e.g., immunosuppresion) in fielded kidney exchanges, in the hope that additional edges in the compatibility graph will result in gains in the final algorithmic matchings.  We now explore this line of reasoning---that is, increasing the $t$ in $\fthresh{t}$ instead of the $k$, which is now endogenous to the underlying model---on realistic generated UNOS graphs of varying sizes.

Figure~\ref{fig:pct-matched} shows the effect on the percentage of patient-donor pairs matched by $2$- and $3$-cycles as a global threshold $t$ is raised incrementally from $t=0$ (the current status quo) to $t=5$.  Intuitively, larger compatibility graphs result in a higher fraction of pairs being matched; however, a complementary approach---making the graph denser via even small increases in $t$---also results in tremendous efficiency gains of $3$--$4$x (depending on $|V|$) over the baseline for $t=0$, and quickly increasing to all pairs being matched by $t=5$.

\begin{figure}[ht!bp]
  \centering
  \includegraphics[width=\linewidth]{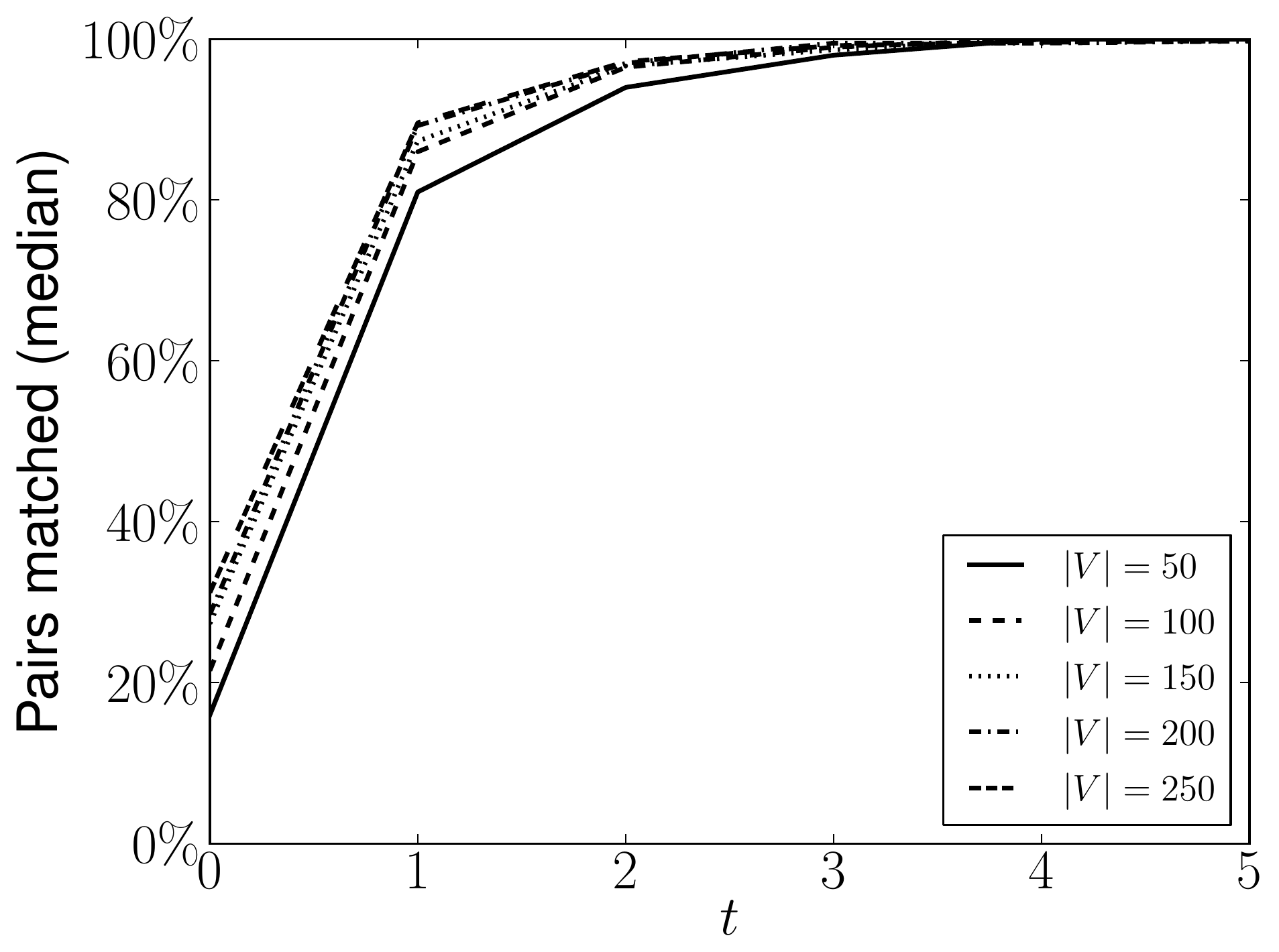}
  \caption{Pairs matched (\%, y-axis) in generated UNOS graphs of varying sizes (lines), as $t$ increases (x-axis).}
  \label{fig:pct-matched}
\end{figure}

We note that any optimal matching found after increasing a global threshold $t$ could also be created by paying to change at most $t$ bits per vertex in a graph; however, the practical \emph{selection} of the minimum-sized set of at most $t$ bits per vertex such that the size of the resulting optimal matching is equal to that found under the global threshold of $t$ is a difficult two-stage problem
and is left as future research.  The large efficiency gains realized by moving from $\fthresh{0}$ to even $\fthresh{1}$ motivate this direction of research.

\section{Conclusions \& Future Research}

Motivated by the increasing size of real-world kidney exchanges, we presented a compact approach to modeling kidney exchange compatibility graphs.  Our approach is intimately connected to classical intersection graph theory, and can be viewed as the first exploration and practical application of $p$-intersection digraphs.  We gave necessary and sufficient conditions for losslessly shrinking the representation of an arbitrary compatibility graph in this model. Real compatibility graphs, however, are not arbitrary, and are created from characteristics of the patients and donors; using real data from the UNOS US-wide kidney exchange, we showed that using only a small number of attributes suffices to represent real graphs.  This observation is of potential practical importance; if real graphs can be represented by a constant number of attributes, then central $\mathcal{NP}$-hard problems in general kidney exchange are solvable in polynomial time.

This paper only addresses the representation of static compatibility graphs; in reality, exchanges are dynamic, with patients and donors arriving and departing over time~\cite{Unver10:Dynamic}.  Extending the proposed method to cover time-evolving graphs is of independent theoretical interest, but may also be useful in speeding up the (presently-intractable) dynamic clearing problem~\cite{Awasthi09:Online,Dickerson12:Dynamic,Anderson14:Stochastic,Dickerson15:FutureMatch,Glorie15:Robust}.  Better exact and approximate methods for computing ($k,t$)-representations of graphs would likely be a prerequisite for that research.
Adaptation of the theoretical results to models of lung, liver, and multi-organ exchange would also be of practical use~\cite{Ergin14:Lung,Ergin15:Liver,Luo15:Mechanism,Dickerson16:Multi}.

\section*{Acknowledgments}
This material is based on work supported by the National Science Foundation (NSF) under grants CCF-1215883, CCF-1525932, IIS-1350598, IIS-1617590, IIS-1320620, and IIS-1546752, the Army Research Office (ARO) under award W911NF-16-1-0061, the Office of Naval Research (ONR) under award N00014-16-1-3075, a Sloan Research Fellowship, a Facebook Fellowship, and a Siebel Scholarship.  It made use of XSEDE computing resources provided by the Pittsburgh Supercomputing Center.  The authors thank participants at EXPLORE-16, EURO-16, and INFORMS-16 sessions, and particularly the anonymous reviewers at AAAI-17 and James Trimble, for helpful discussion. 




\bibliography{dairefs,intgraphs}
\bibliographystyle{aaai}

\appendix
\section{Additional Proofs}\label{app:proofs}
In this section, we provide the full proof of Theorem~\ref{thm:np-hardness-main-paper}.
Recall the {\sc $(k,t)$-representation with Ignored Edges}: given an input of a directed graph $G = (V,E)$, a subset $F$ of $E$, and integers $k \ge 1$ and $t \ge 0$,
this problem asks whether there exist bit vectors $\vect{d}_i$ and $\vect{p}_i$ of length $k$ for each $i \in V$
such that the $\{i,j\} \in F$ if and only if $\langle \vect{d}_i, \vect{p}_j \rangle \le t$.

Consider the gadget $G_k$ defined as follows on a graph on $\binom{k}{2} + k$ vertices.
Let $G_k^1$ be the graph defined in Theorem~\ref{thm:not0impl} on $\binom{k}{2}$ vertices,
i.e., the complement of a directed cycle on this many vertices.
Associate with each vertex $u \in G_k^1$ a unique element from $\binom{[k]}{2}$
(all subsets of $[k]$ of size 2).
Let $G_k^2$ be an independent set of $k$ vertices.
For each vertex $i \in G_k^2$, $i \in [k]$, add an incoming edge into $i$ from $u \in G_k^2$
if and only if $i \in S_u$.
Figure~\ref{fig:gadget} shows $G_4$.

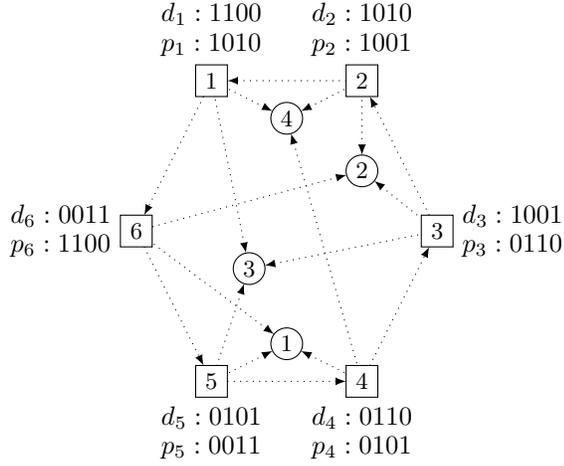
\begin{figure}[htp]
  \begin{center}
  \begin{tikzpicture}
  \tikzstyle{ns} = [draw,shape=circle,outer sep=0,inner sep=1,minimum size=12,font=\small]
  \tikzstyle{nsquare} = [draw,shape=rectangle,outer sep=0,inner sep=1,minimum size=12,font=\small]
  
  \node [ns] (123) at (0,-1.5) {$1$};
  \node [ns] (145) at (1,0.8) {$2$};
  \node [ns] (246) at (-0.5,-0.5) {$3$};
  \node [ns] (356) at (0,1.5) {$4$};
  \node [nsquare,label={[label distance=0cm,align=left]90:{$d_1:1100$\\$p_1:1010$}}] (1) at (-1,2) {$1$};
  \node [nsquare,label={[label distance=0cm,align=left]90:{$d_2:1010$\\$p_2:1001$}}] (2) at (1,2) {$2$};
  \node [nsquare,label={[label distance=0cm,align=left]0:{$d_3:1001$\\$p_3:0110$}}] (3) at (2,0) {$3$};
  \node [nsquare,label={[label distance=0cm,align=left]270:{$d_4:0110$\\$p_4:0101$}}] (4) at (1,-2) {$4$};
  \node [nsquare,label={[label distance=0cm,align=left]270:{$d_5:0101$\\$p_5:0011$}}] (5) at (-1,-2) {$5$};
  \node [nsquare,label={[label distance=0cm,align=left]180:{$d_6:0011$\\$p_6:1100$}}] (6) at (-2,0) {$6$};
  \draw  (1) edge[->,dotted] node {} (6);
  \draw  (2) edge[->,dotted] node {} (1);
  \draw  (3) edge[->,dotted] node {} (2);
  \draw  (4) edge[->,dotted] node {} (3);
  \draw  (5) edge[->,dotted] node {} (4);
  \draw  (6) edge[->,dotted] node {} (5);
  
  \draw  (123) edge[<-,dotted] node {} (4);
  \draw  (123) edge[<-,dotted] node {} (5);
  \draw  (123) edge[<-,dotted] node {} (6);
  
  \draw  (145) edge[<-,dotted] node {} (2);
  \draw  (145) edge[<-,dotted] node {} (3);
  \draw  (145) edge[<-,dotted] node {} (6);
  
  \draw  (246) edge[<-,dotted] node {} (1);
  \draw  (246) edge[<-,dotted] node {} (3);
  \draw  (246) edge[<-,dotted] node {} (5);
  
  \draw  (356) edge[<-,dotted] node {} (1);
  \draw  (356) edge[<-,dotted] node {} (2);
  \draw  (356) edge[<-,dotted] node {} (4);
  \end{tikzpicture}
  \end{center}
  \caption{Gadget $G_4$ with a subset of \emph{non}-edges shown; all edges between circle vertices (those in $G_4^2$) are also not in $E$.}
  \label{fig:gadget}
\end{figure}

Denote the \emph{donor neighborhood} of $i \in V$
by $N_d(i) = \{j \in V \suchthat (i,j) \in E, i \ne j\}$,
i.e., the set of patients compatible with  donor $i$.
Similarly, the \emph{patient neighborhood} of $j \in V$
is $N_p(j) = \{i \in V \suchthat  \{i,j\} \in E, i \ne j\}$.

\begin{lemma}
  There is a unique (up to permutations) $(k,1)$-representation of $G_k$.
\end{lemma}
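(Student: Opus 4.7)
The plan is to first verify existence of the canonical representation --- $\vect{d}_v = e_{a_v} + e_{b_v}$ for $v \in G_k^1$ with $S_v = \{a_v,b_v\}$, $\vect{p}_i = \vect{1} - e_i$ for $i \in G_k^2$, and $\vect{p}_v = e_{a_u} + e_{b_u}$ for $v \in G_k^1$ where $u$ is the unique non-neighbor of $v$ in $G_k^1$ --- which is a direct calculation, and then to prove uniqueness by pinning down the bit vectors of any $(k,1)$-representation in stages. Throughout I use that $\langle \vect{d}_i, \vect{p}_j \rangle \le 1$ for $(i,j) \in E$ and $\ge 2$ for non-edges with $i \ne j$. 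I write $\vect{q}_i := \vect{p}_i$ for $i \in G_k^2$; since $G_k^2$ has no outgoing edges, its donor vectors are unconstrained, so ``uniqueness'' is understood modulo coordinate permutations and this freedom.

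First I would establish weight bounds. For each $v \in G_k^1$, $|\vect{d}_v| \ge 2$ (otherwise no inner product can reach $2$) and $|\vect{d}_v| \le k-1$ (else $\langle \vect{d}_v, \vect{q}_i \rangle \ge 2$ for every $i$, contradicting the edges from $v$ to vertices in $S_v$). An analogous argument gives $2 \le |\vect{q}_i| \le k-1$. I would then argue jointly that $|\vect{d}_v| = 2$ exactly and that there is a permutation $\sigma$ of $[k]$ with $\vect{q}_i = \vect{1} - e_{\sigma(i)}$ and $\vect{d}_v = e_{\sigma(a_v)} + e_{\sigma(b_v)}$. The crux is that $\langle \vect{d}_v, \vect{q}_i \rangle \ge 2$ for each of the $\binom{k-1}{2}$ pairs $v$ with $i \notin S_v$ forces the support of $\vect{q}_i$ to contain the support of every such $\vect{d}_v$, while $\langle \vect{d}_v, \vect{q}_i \rangle \le 1$ for the $k-1$ pairs with $i \in S_v$ caps the overlap tightly from above. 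Relabelling bits by $\sigma^{-1}$ yields the canonical form.

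Finally I would recover $\vect{p}_v$ for $v \in G_k^1$ using the now-known $\vect{d}_v$ and the internal complement-of-cycle structure. For each $v$ there is exactly one $u \in G_k^1$ with $(u,v)$ a non-edge; the constraint $\langle \vect{d}_u, \vect{p}_v \rangle \ge 2$ forces both coordinates of $S_u$ to be $1$ in $\vect{p}_v$, while $\langle \vect{d}_{u'}, \vect{p}_v \rangle \le 1$ for every other $u'$ caps the overlap of $\vect{p}_v$'s support with every other 2-subset of $[k]$ at $1$. A short counting argument --- $\binom{s}{2} = 1$ iff $s = 2$ --- then forces $\vect{p}_v$ to be exactly the characteristic vector of $S_u$, as required.

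The main obstacle is the joint step that simultaneously pins down $\{\vect{d}_v\}_{v \in G_k^1}$ and $\{\vect{q}_i\}_{i \in G_k^2}$, since the constraints couple them without any a priori structural handle on either side. I expect the cleanest route is an extremal or double-counting argument exploiting the tight weight sandwich $2 \le |\vect{d}_v|, |\vect{q}_i| \le k-1$ together with the fact that the ``large'' vs.\ ``small'' pattern in the $\binom{k}{2} \times k$ inner-product matrix matches the incidence relation between 2-subsets of $[k]$ and $[k]$ itself, a pattern that admits essentially one combinatorial factorization once the $G_k^1$ weights are forced down to exactly $2$.
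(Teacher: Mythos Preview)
Your plan leaves the decisive step---forcing $|\vect{d}_v|=2$ for every $v\in G_k^1$---as an unspecified ``extremal or double-counting argument'' on the bipartite $G_k^1\to G_k^2$ constraints, and you yourself flag this as the main obstacle. The paper avoids this obstacle entirely by reversing your order of attack: it uses the \emph{internal} structure of $G_k^1$ first. Because $G_k^1$ is the complement of a directed cycle on $\binom{k}{2}$ vertices, each $u$ has exactly one outgoing non-edge $(u,u-1)$, which at threshold $1$ forces two conflict bits $\{q_1^u,q_2^u\}\subseteq \ConflictSet_d(u)\cap \ConflictSet_p(u-1)$. These $\binom{k}{2}$ pairs must be pairwise distinct (if $\{q_1^u,q_2^u\}=\{q_1^v,q_2^v\}$ then both $(u,v-1)$ and $(v,u-1)$ would be non-edges, but at least one is an edge), hence they exhaust all of $\binom{[k]}{2}$. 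A third bit in any $\ConflictSet_d(u)$ would produce two further pairs already owned by other donors, creating two extra outgoing non-edges from $u$---impossible. This pigeonhole argument is short and uses no information about $G_k^2$; only afterward does the paper bring in $G_k^2$ to identify \emph{which} bit is which, at which point the analysis is one-sided (only the $\vect{q}_i$ unknown) and routine. Your final step, recovering $\vect{p}_v$ from the known $\vect{d}_u$'s via the single incoming non-edge, is essentially the paper's argument and is fine.

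One further correction: your claim that ``since $G_k^2$ has no outgoing edges, its donor vectors are unconstrained'' is wrong. Absent edges are also constraints: every $(i,w)$ with $i\in G_k^2$ is a \emph{non}-edge, so $\langle \vect{d}_i,\vect{p}_w\rangle\ge 2$ for every $w\in G_k^1$. Once one knows the patient vectors of $G_k^1$ range over all $2$-subsets of $[k]$, this forces $\vect{d}_i=\vect{1}$ for every $i\in G_k^2$. So the representation is genuinely unique there too, not merely ``unique modulo this freedom.''
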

\begin{proof}  First consider $G_k^1$.
  For all $u \in V(G_k^1)$,
  since $\{u,u-1\} \notin E(G_k^1)$, and the compatibility function is $\fthresh{1}$,
  there exist two distinct conflict bits $q^u_1$ and $q^u_2$ in $\ConflictSet_d(u) \cap \ConflictSet_p(u-1)$.
  Moreover, for any $u,v$ distinct, $\{q^u_1, q^u_2\} \ne \{q^v_1, q^v_2\}$.
  Otherwise, $\{q^{u}_1, q^{u}_2\} \subseteq \ConflictSet_p(v-1)$ and $\{q^{v}_1, q^{v}_2\} \subseteq \ConflictSet_p(u-1)$,
  but at least one of the edges $\{u,v-1\}$ or $\{v,u-1\}$ exists in $G_k^1$.
  
  In addition, $\abs{\ConflictSet_d(u)} = 2$ for all $u \in V(G_k^1)$.
  Suppose not, and there exists a third distinct (from $q^{u}_1$ and $q^{u}_2$) conflict bit $q^{u}_3$ in $\ConflictSet_d(u)$.
  As the number of vertices is $\binom{k}{2}$, 
  there exists a vertex $v_1$ with $\{q^{v_1}_1,q^{v_1}_2\} = \{q^{u}_1, q^{u}_3\}$,
  and a (different) vertex $v_2$ with $\{q^{v_2}_1,q^{v_2}_2\} = \{q^{u}_2, q^{u}_3\}$.
  Then $\{u,v_1-1\}$ and $\{u,v_2-1\}$ are both not in $E(G_k^1)$.
  However, $u$ has edges to all vertices except itself and $u-1$,
  which is a contradiction, as $u$, $v_1$, and $v_2$ are all distinct.
  From this, it also follows that $\abs{\ConflictSet_p(u)} = 2$.
  
%
  
  We have thus shown that every vertex $u \in G_k^1$ has exactly two bits set to one in its donor attribute vector,
  with a unique pair of bits per vertex, and $\ConflictSet_d(u) = \ConflictSet_p(u-1)$.
  However, without more structure, it is not possible to tell in which donor vectors a particular conflict bit appears.
  The additional graph $G_k^2$ allows us to identify this, up to permutations.
  
  Since there are no outgoing edges from any of the vertices in $G_k^2$, 
  and every pair of bits in $\binom{[k]}{2}$ appears in exactly one patient vector of a vertex in $G_k^1$, 
  each donor vector in $G_k^2$ must be the all-ones vector of length $k$.
  
  Consider vertex $i \in [k]$ in $G_k^2$.
  It has an incoming edge from each vertex $u \in V(G_k^1)$ such that $i \in S_u$
  and it is missing the $\binom{k-1}{2}$ other possible incoming edges from $G_k^1$
  (note that the labeling of the vertices, as well as the choices of the sets $S_u$, 
  are made without any knowledge of the bit-vectors associated with the vertices).
  We next show that $\abs{\cap_{u \in N_p(i)} \ConflictSet_d(u)} = 1$.
  That this quantity is at most 1 is clear, as $\ConflictSet_d(u)$ and $\ConflictSet_d(v)$ 
  intersect in at most one conflict bit for all $u, v \in V(G_k^1)$, $u \ne v$.
  If this quantity were 0, then for some $u,v \in N_p(i)$, $\ConflictSet_d(u) \cap \ConflictSet_d(v) = \emptyset$.
  But then at least two zeroes would appear in $\ConflictSet_p(i)$,
  which is a contradiction as it implies that $i$ would have more than $k$ incoming edges.
  Thus, the patient vector $\vect{p}_i$ for $i \in V(G_k^2)$ has exactly one zero and ones elsewhere.
  Moreover, since $N_p(i) \ne N_p(j)$ for any distinct $i,j \in [k]$, it follows that $\vect{p}_i \ne \vect{p}_j$,
  so each patient vector is distinct and the position of its only zero is unique.
\end{proof}

\begin{lemma}
\label{lem:fixbits}
  Consider a digraph $G$ having $G_k$ as a subgraph and an additional vertex $x \notin V(G_k)$.
  We use the compatibility function $\fthresh{1}$ and seek to find a $(k,1)$-representation for the
  induced subgraph $G[V(G_k) \cup \{x\}]$.
  Let $U \subseteq V(G_k^1)$ having that property that if $v \in V(G_k^1)$ 
  with $\ConflictSet_d(v) \subseteq \cup_{u \in U} \ConflictSet_d(u)$, then $v \in U$.
  Let $U' = \{u \in V(G_k^1) \suchthat u+1 \in U\}$.
  Let $Q = \cup_{u \in U} \ConflictSet_d(u)$.
%

  If $N_p(x) = V(G_1^k) \setminus U$, then $\ConflictSet_p(x) = Q$.
  If $N_d(x) = V(G_1^k) \setminus U'$, then $\ConflictSet_d(x) = Q$.
\end{lemma}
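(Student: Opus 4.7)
\textbf{Proof proposal for Lemma~\ref{lem:fixbits}.}
The plan is to translate the neighborhood hypotheses into intersection conditions on the bit vectors (using $\fthresh{1}$ and the structural facts about $G_k$ from the previous lemma), and then exploit the bijection between elements of $\binom{[k]}{2}$ and vertices of $G_k^1$ to force equality of sets. I will treat the two statements separately, since they are symmetric after the identification $\ConflictSet_p(u) = \ConflictSet_d(u+1)$ provided by the previous lemma.

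For the first statement, I would start by noting that since $|\ConflictSet_d(u)| = 2$ for every $u \in V(G_k^1)$, we have $(u,x) \in E$ iff $|\ConflictSet_d(u) \cap \ConflictSet_p(x)| \le 1$ iff $\ConflictSet_d(u) \not\subseteq \ConflictSet_p(x)$. The hypothesis $N_p(x) = V(G_k^1) \setminus U$ then yields the characterization $u \in U \iff \ConflictSet_d(u) \subseteq \ConflictSet_p(x)$. The inclusion $Q \subseteq \ConflictSet_p(x)$ is then immediate by taking the union over $u \in U$. For the reverse inclusion, I would proceed by contradiction: if some bit $q \in \ConflictSet_p(x) \setminus Q$ existed, I would use $|\ConflictSet_p(x)| \ge 2$ to pick a second bit $q' \in \ConflictSet_p(x)$, invoke the bijection $\binom{[k]}{2} \leftrightarrow V(G_k^1)$ to get a vertex $v$ with $\ConflictSet_d(v) = \{q,q'\}$, and observe $v \in U$ by the characterization, whence $q \in Q$, a contradiction. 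The supply of a second bit is where the closure assumption on $U$ and the absence of edges from $V(G_k^2)$ to $x$ come in: either $U$ is nonempty, giving $|Q| \ge 2$ directly via any $u_0 \in U$, or $U = \emptyset$, in which case the constraint from the (all-ones) donor vectors of $G_k^2$ vertices forces $|\ConflictSet_p(x)| \ge 2$, and applying the pair-extraction argument immediately contradicts $U = \emptyset$ itself.

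The second statement follows by the dual argument. Using $\ConflictSet_p(u) = \ConflictSet_d(u+1)$, rewrite $(x,u) \in E$ as $\ConflictSet_d(u+1) \not\subseteq \ConflictSet_d(x)$; the index shift is exactly why we take $U' = \{u : u+1 \in U\}$ so that the hypothesis $N_d(x) = V(G_k^1) \setminus U'$ becomes $v \in U \iff \ConflictSet_d(v) \subseteq \ConflictSet_d(x)$. Again $Q \subseteq \ConflictSet_d(x)$ is free, and the reverse inclusion follows by the same ``extract a companion bit and use the pair bijection'' argument. The bound $|\ConflictSet_d(x)| \ge 2$ needed to produce a companion bit follows either from a nonempty $U$ or from the non-edges to $V(G_k^2)$ (whose patient vectors are $[k] \setminus \{i\}$ and force at least two bits in $\ConflictSet_d(x)$).

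I expect the only delicate step to be justifying the existence of a companion bit $q'$ when $q \in \ConflictSet_p(x) \setminus Q$ (resp.\ $q \in \ConflictSet_d(x) \setminus Q$), since this is the place where both the closure property of $U$ and the auxiliary gadget $G_k^2$ are actually doing work. Everything else is a mechanical unwinding of the $\fthresh{1}$ rule together with the rigid bit-assignment to $V(G_k)$ from the previous lemma.
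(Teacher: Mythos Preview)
Your proposal is correct and follows essentially the same route as the paper: use $|\ConflictSet_d(u)|=2$ to turn the threshold condition into a containment condition, get $Q\subseteq \ConflictSet_p(x)$ for free, and for the reverse inclusion pair a hypothetical stray bit with a companion to manufacture a vertex of $G_k^1$ that must land in $U$. The paper's writeup simply takes the companion bit from $Q$ (implicitly assuming $U\neq\emptyset$) and says ``analogously'' for the donor side, whereas you are a bit more careful in invoking $G_k^2$ to dispose of the degenerate case $U=\emptyset$; note, however, that the closure hypothesis on $U$ is really a consistency condition (it is forced by the characterization $u\in U\iff \ConflictSet_d(u)\subseteq \ConflictSet_p(x)$) rather than an ingredient that does independent work in the contradiction step.
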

\begin{proof}
  We use the fact that there are exactly two bits set to one in the donor and patient vectors of each vertex in $G_k$
  in any $(k,1)$-representation.
  For the first statement, since $x$ has no edge from $u \in U$, $\ConflictSet_p(x) \supseteq \ConflictSet_d(u)$.
  Thus $\ConflictSet_p(x) \supseteq Q$.
  Now let $v \in V(G_k^1) \setminus U$ and $q_v \in \ConflictSet_d(v) \setminus Q$.
  If $q_v \in \ConflictSet_p(x)$, then for each $q \in Q$, there exists a vertex $w$ in $G_k^1$ with $\ConflictSet_d(w) = \{q,q_v\}$,
  so that $\{w,x\}$ would also not be an edge of $G$, a contradiction.
  Hence, $\ConflictSet_p(x) = Q$.
  The second statement follows analogously.
\end{proof}

\begingroup
\def\thetheorem{\ref{thm:np-hardness-main-paper}}
\begin{theorem}
  The {\sc $(k,t)$-representation with Ignored Edges} problem is $\mathcal{NP}$-complete.
\end{theorem}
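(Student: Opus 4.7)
The plan is to complete the reduction from 3SAT, leveraging the bit-grounding gadget $G_k$ and Lemma~\ref{lem:fixbits} that are already in place. Membership in $\mathcal{NP}$ is immediate: a certificate consists of the $2|V|$ bit vectors of length $k$, and the correctness of the induced edge relation restricted to $F$ is checkable in polynomial time.

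For hardness, given a 3SAT formula $\phi$ with $n$ variables $x_1,\ldots,x_n$ and $m$ clauses $C_1,\ldots,C_m$, I would set $t=1$ and take $k = 2n + 1$ (possibly slightly larger; see below), reserving bit $2i-1$ for the literal $x_i$, bit $2i$ for $\neg x_i$, and bit $2n+1$ as a ``padding'' bit. Into the instance I embed the gadget $G_k$, which by the preceding lemma pins down the bit semantics up to a permutation of positions. On top of $G_k$ I would add: (a) an ``assignment'' vertex $s$; (b) one ``variable'' vertex $v_i$ per variable $x_i$, whose patient vector is forced via Lemma~\ref{lem:fixbits} to be supported on $\{2i-1,2i\}$; (c) one ``clause'' vertex $c_j$ per clause $C_j$, whose patient vector is forced to be supported on $\{2n+1\}\cup\{\text{bit positions of the literals of }C_j\}$. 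The donor vector of $s$ would be forced (by an analogous attachment into $G_k^1$) to contain the padding bit. The set $F$ of relevant pairs consists of the edges inside $G_k$, the attachment edges of the new vertices into $G_k^1$, the pairs $(s,v_i)$ for all $i$, and the pairs $(s,c_j)$ for all $j$; every other pair of $V$ is simply ignored.

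Correctness is engineered through which of these pairs belong to $E$. Putting $(s,v_i)$ into $E$ for each $i$ enforces $\langle \vect{d}_s,\vect{p}_{v_i}\rangle\le 1$, so at most one of the bits $2i-1,2i$ is set in $\vect{d}_s$; hence $\vect{d}_s$ encodes a well-defined truth assignment $\alpha$. Putting $(s,c_j)\notin E$ for each $j$ enforces $\langle \vect{d}_s,\vect{p}_{c_j}\rangle\ge 2$; since the padding bit $2n+1$ contributes $1$ to this inner product, at least one literal bit of $C_j$ must also be set in $\vect{d}_s$, which is exactly the condition that $\alpha$ satisfies $C_j$. Conversely, a satisfying assignment $\alpha$ yields a valid $(k,1)$-representation by setting $\vect{d}_s$ to encode the true literals of $\alpha$ together with the padding bit, while all other vectors are already determined by the gadget and its attachments. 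The construction has size polynomial in $n+m$.

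The principal obstacle is the bookkeeping around Lemma~\ref{lem:fixbits}: it only pins a patient vector's support to a set of the form $\cup_{u\in U}\ConflictSet_d(u)$ for a subset $U\subseteq V(G_k^1)$ closed under the containment condition in the lemma, and symmetrically for donor vectors. I therefore need to verify that each desired support (the variable pairs $\{2i-1,2i\}$, each clause-literal triple with the padding bit appended, and the singleton $\{2n+1\}$) is realizable as such a union of $\ConflictSet_d$ sets from $G_k^1$. If some target support fails to be a closed union, I would enlarge $k$ by a small constant number of extra ``slack'' bits and introduce corresponding dummy vertices in $G_k^1$, so that every required support becomes a valid $U$. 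Once these pinning details are handled, verifying the equivalence of satisfiability and the existence of a $(k,1)$-representation reduces to the boolean identities spelled out above, and the overall reduction is polynomial.
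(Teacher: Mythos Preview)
Your reduction is essentially the paper's, and the high-level logic (variable vertices enforce ``at most one literal per variable pair,'' clause vertices enforce ``at least one true literal per clause'') is exactly right. The one real gap is in how you force the padding bit into $\vect{d}_s$. You propose to do this ``by an analogous attachment into $G_k^1$,'' i.e.\ via Lemma~\ref{lem:fixbits}, and you list the singleton $\{2n+1\}$ among the supports to realize as a closed union of $\ConflictSet_d$-pairs. But Lemma~\ref{lem:fixbits} pins the \emph{entire} donor set $\ConflictSet_d(s)$ to a fixed $Q$; if you invoke it on $s$, then $\ConflictSet_d(s)$ is no longer free to encode a truth assignment, and the reduction collapses. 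Moreover, a singleton can never arise as a union of two-element $\ConflictSet_d(u)$ sets, so enlarging $U$ does not help here either.

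The paper resolves this by taking \emph{two} extra bits, $k=2n+2$, and adding one auxiliary vertex $u$ whose patient vector is pinned (via Lemma~\ref{lem:fixbits}) to $\{2n+1,2n+2\}$. The non-edge $(s,u)$ then forces $\langle \vect{d}_s,\vect{p}_u\rangle\ge 2$, so both extra bits lie in $\ConflictSet_d(s)$, while every variable bit of $\vect{d}_s$ remains unconstrained. Each clause patient vector is set to $\{c_1,c_2,c_3,2n+2\}$ (a $4$-element set, which \emph{is} a closed union of the six relevant pairs), so the non-edge $(s,c)$ already gets one conflict at bit $2n+2$ and must pick up a second among the literal bits. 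Your instinct to add slack bits is correct, but the slack is needed for this auxiliary non-edge mechanism, not to repair a closed-union failure in Lemma~\ref{lem:fixbits}; once you make that change, your argument coincides with the paper's.
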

\addtocounter{theorem}{-1}
\endgroup
\begin{proof}

  Consider a 3SAT formula on $n$ variables and with $m$ clauses.
  Set $k = 2n+2$, and
  build the following graph on $2 + n + m + \binom{k}{2} + k$ vertices.
  The first two vertices are labeled $v$ and $u$.
  Then there is 
  a vertex $v_i$ for each variable $i \in [n]$,
  a vertex $c$ for each clause $c \in [m]$.
  Call the subgraph induced by these $2 + n + m$ vertices $G'$.
  The last vertices come from the gadget $G_k$.
  
  The vertices in $G_k^2$ ground the $k$ bits used in each donor and patient vector..
  We think of the $k$ bits, in order, as corresponding to the $n$ positive literals, then their $n$ negations, followed by two ``extra'' bits.
  Then the index of literal $x_i$ will be $i$, and the index of literal $\bar x_i$ will be $n+i$.
  For $i$ and $j$ distinct in $V(G_k^2)$, $\abs{N_p(i) \cap N_p(j)} = 1$ within $G_k$.
  Denote this vertex of $G_k^1$ by $v(i,j)$,
  and without loss of generality we can assume that $\ConflictSet_d(v(i,j)) = \{i,j\}$.
  
  The edges among vertices in the induced subgraph $G_k$ are already defined; we define (a subset) of the rest of the edges.
  Together, these comprise precisely the subset $F$ of the edges and non-edges specified as an input the instance we are creating of
  {\sc $(k,t)$-representation with Ignored Edges}.

  Vertex $v$ has no incoming edges, and the only outgoing edges from $v$ to $V(G')$ are to every variable vertex $v_i$, $i \in [n]$.
  The rest of the vertices that are not in $G_k$ have no outgoing edges at all, to either $V(G')$ or $V(G_k)$, 
  and the only incoming edges are from vertices of $G_k^1$.  
  Vertex $u$ has an incoming edge from every vertex of $G_k^1$ except $v(2n+1,2n+2)$.
  For each variable vertex $v_i$, $i \in [n]$, it has an incoming edge from every vertex in $V(G_k^1)$ except $v(i,n+i)$.
  For each clause $c \in [m]$, let $\{c_1,c_2,c_3\}$ be the indices of the three literals that appear in $c$.
  Let $C \subset V(G_k^1)$ be $\{v(c_1,c_2), v(c_1,c_3), v(c_2,c_3), v(c_1,k), v(c_2,k) ,v(c_3,k)\}$.
  Then the vertex corresponding to $c$ has an incoming edge from every vertex in $V(G_k^1) \setminus C$.
  
  Every vertex of $V(G')$ except for $v$ will have a donor vector with every bit set to one 
  because there are no outgoing edges to any vertex of $G_k^1$,
  and $v$ will have an all-ones patient vector because it has no incoming edges from $G_k^1$.
  By Lemma~\ref{lem:fixbits}, in any $(k,1)$-representation of $G$,
  vertex $u$ will have $\ConflictSet_p(u) = \{2n+1,2n+2\}$.
  Variable vertex $v_i$, $i \in [n]$, will have $\ConflictSet_p(v_i) = \{i,n+i\}$.
  Clause vertex $c \in [m]$ will have $\ConflictSet_p(c) = \{c_1,c_2,c_3,2n+2\}$.
  
  Since the graph does not have an edge from $v$ to $u$, $\{2n+1, 2n+2\} \subseteq \ConflictSet_d(v)$
  (these are the only two conflict bits in $\ConflictSet_p(u)$ and the threshold is $1$). 
  Since the graph has an edge from $v$ to each variable vertex $v_i$, $i \in [n]$, 
  $\ConflictSet_d(v)$ must contain at most one of the indices corresponding to the variable or its negation 
  (there are no conflicts from the extra bits, which are set to $0$ in the patient vector of $v_i$). 
  Since the graph does not have an edge from $v$ to any of the clause vertices, 
  it has to have at least one conflict bit in a position corresponding to one of the three literals in the clause 
  (the other conflict comes from the extra bit $2n+2$).
  
  Thus, finding a suitable $(k,1)$-representation that satisfies the adjacencies of edges that appear in $F$
  would involve finding an appropriate set $\ConflictSet_d(v)$,
  which we have shown corresponds to choosing at most one value for each $x_i$,
  as well as choosing at least one literal that appears in each clause.
  This is the same as the problem of finding a satisfying formula for the initial instance of 3SAT.

\begin{figure}[t]
  \begin{center}
  \begin{tikzpicture}
  \tikzstyle{ns} = [draw,shape=circle,outer sep=0,inner sep=1,minimum size=12,font=\small]
  \tikzstyle{nsquare} = [draw,shape=rectangle,outer sep=0,inner sep=1,minimum size=12,minimum width=1.5cm,font=\small]
  
  \node [ns] (v) at (-2,0) {$v$};
  \node [ns] (c) at (0,4) {$c$};
  \node [ns] (u) at (0,-4) {$u$};
  \node [ns] (v1) at (0,2) {$v_1$};
  \node [ns] (v2) at (0,0) {$v_2$};
  \node [ns] (v3) at (0,-2) {$v_3$};
  \node [nsquare,label={[label distance=0cm,align=left]0:{$10100000$}}] (13) at (4,4.5) {$v(x_1,x_3)$};
  \node [nsquare,label={[label distance=0cm,align=left]0:{$10001000$}}] (15) at (4,3.5) {$v(x_1,\bar x_2)$};
  \node [nsquare,label={[label distance=0cm,align=left]0:{$00101000$}}] (35) at (4,2.5) {$v(x_3,\bar x_2)$};
  \node [nsquare,label={[label distance=0cm,align=left]0:{$10000001$}}] (18) at (4,1.5) {$v(x_1,8)$};
  \node [nsquare,label={[label distance=0cm,align=left]0:{$00100001$}}] (38) at (4,0.5) {$v(x_3,8)$};
  \node [nsquare,label={[label distance=0cm,align=left]0:{$00001001$}}] (58) at (4,-0.5) {$v(\bar x_2,8)$};
  \node [nsquare,label={[label distance=0cm,align=left]0:{$10010000$}}] (14) at (4,-1.5) {$v(x_1,\bar x_1)$};
  \node [nsquare,label={[label distance=0cm,align=left]0:{$01001000$}}] (25) at (4,-2.5) {$v(x_2,\bar x_2)$};
  \node [nsquare,label={[label distance=0cm,align=left]0:{$00100100$}}] (36) at (4,-3.5) {$v(x_3,\bar x_3)$};
  \node [nsquare,label={[label distance=0cm,align=left]0:{$00000011$}}] (78) at (4,-4.5) {$v(7,8)$};
  \draw  (v) edge[->] node {} (v1);
  \draw  (v) edge[->] node {} (v2);
  \draw  (v) edge[->] node {} (v3);
  \draw  (v) edge[->,dotted] node {} (c);
  \draw  (v) edge[->,dotted] node {} (u);
  \draw  (v1) edge[<-,dotted] node {} (14.west);
  \draw  (v2) edge[<-,dotted] node {} (25.west);
  \draw  (v3) edge[<-,dotted] node {} (36.west);
  \draw  (u) edge[<-,dotted] node {} (78.west);
  \draw  (c) edge[<-,dotted] node {} (13.west);
  \draw  (c) edge[<-,dotted] node {} (15.west);
  \draw  (c) edge[<-,dotted] node {} (35.west);
  \draw  (c) edge[<-,dotted] node {} (18.west);
  \draw  (c) edge[<-,dotted] node {} (38.west);
  \draw  (c) edge[<-,dotted] node {} (58.west);
  \end{tikzpicture}
  \end{center}
  \caption{Example of 3SAT reduction to $(k,t)$-representation.}
  \label{fig:reduction}
  \end{figure}
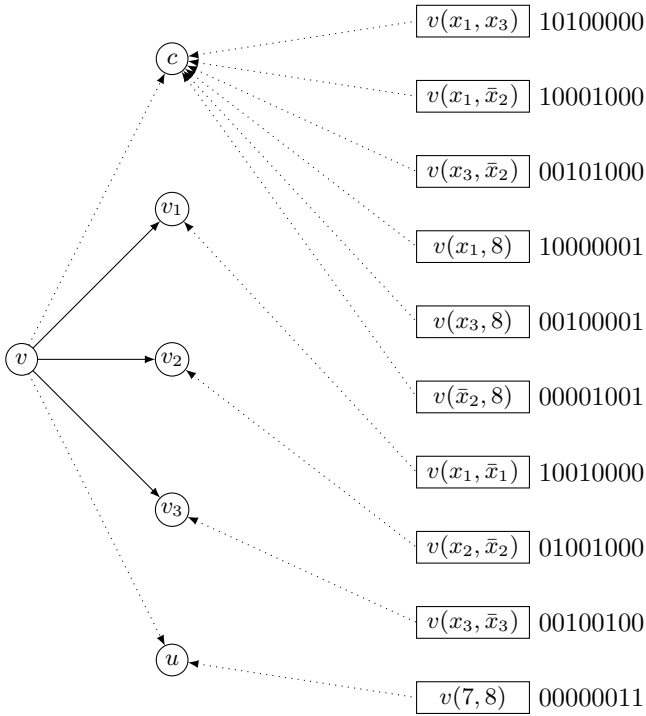
  
  As an example, consider the 3SAT formula $x_1 \lor \bar x_2 \lor x_3$.
  Figure~\ref{fig:reduction} shows the most relevant part of the graph used in the reduction.
  One possible $(k,1)$-representation may have $\ConflictSet_d(v) = \{1,7,8\}$, indicating $x_1 = 1$ and the rest of the variables are arbitrary.
  Another example of a possible representation is $\ConflictSet_d(v) = \{1,3,5,7,8\}$, meaning $x_1 = 1$, $x_2 = 0$ (index $5$ appears), and $x_3 = 1$.
\end{proof}

\end{document}